\def\eqref#1{equation~\ref{#1}}
\def\1{\bm{1}}
\def\rx{{\textnormal{x}}}
\def\rvepsilon{{\mathbf{\epsilon}}}
\def\rvtheta{{\bm{\theta}}}
\def\rvs{{\mathbf{s}}}
\def\rvx{{\mathbf{x}}}
\def\rvy{{\mathbf{y}}}
\def\rmI{{\mathbf{I}}}
\def\rmX{{\mathbf{X}}}
\def\vmu{{\bm{\mu}}}
\def\vtheta{{\bm{\theta}}}
\def\vepsilon{{\bm{\varepsilon}}}
\def\vphi{{\bm{\phi}}}
\def\vb{{\bm{b}}}
\def\vs{{\bm{s}}}
\def\vw{{\bm{w}}}
\def\mI{{\bm{I}}}
\def\mL{{\bm{L}}}
\def\mQ{{\bm{Q}}}
\def\mPhi{{\bm{\Phi}}}
\def\mSigma{{\bm{\Sigma}}}
\DeclareMathAlphabet{\mathsfit}{\encodingdefault}{\sfdefault}{m}{sl}
\SetMathAlphabet{\mathsfit}{bold}{\encodingdefault}{\sfdefault}{bx}{n}
\newcommand{\KL}{D_{\mathrm{KL}}}
\newcommand{\Var}{\mathrm{Var}}
\DeclareMathOperator*{\argmax}{arg\,max}
\newcommand{\iid}{\stackrel{\mathrm{iid}}{\sim}}
\DeclareMathOperator{\proj}{proj}
\DeclareMathOperator{\diag}{diag}
\newtheorem{theorem}{Theorem}
\newenvironment{nohyphens}
     {\par\sloppy\exhyphenpenalty=\@M
      \@ifundefined{l@nohyphenation}
        {\language=\@cclv}
        {\hyphenrules{nohyphenation}}%
     }
     {\par
      \@ifundefined{l@nohyphenation}
        {}
        {\endhyphenrules}%
     }
\title{Dynamic Online Ensembles of Basis Expansions}
\author{\name Daniel Waxman \email daniel.waxman@stonybrook.edu \\
      \addr Department of Electrical and Computer Engineering \\
      Stony Brook University
      \AND
      \name Petar M. Djuri\'c \email petar.djuric@stonybrook.edu \\
      \addr Department of Electrical and Computer Engineering \\
      Stony Brook University}
\definecolor{oebecolor}{RGB}{196, 59, 84}
\definecolor{doebecolor}{RGB}{84, 196, 59}
\definecolor{sdoebecolor}{RGB}{59, 84, 196}
\begin{document}

\maketitle

\begin{abstract}
Practical Bayesian learning often requires (1) online inference, (2) dynamic models, and (3) ensembling over multiple different models. Recent advances have shown how to use random feature approximations to achieve scalable, online ensembling of Gaussian processes with desirable theoretical properties and fruitful applications. One key to these methods' success is the inclusion of a random walk on the model parameters, which makes models dynamic. We show that these methods can be generalized easily to any basis expansion model and that using alternative basis expansions, such as Hilbert space Gaussian processes, often results in better performance. To simplify the process of choosing a specific basis expansion, our method's generality also allows the ensembling of several entirely different models, for example, a Gaussian process and polynomial regression. Finally, we propose a novel method to ensemble static and dynamic models together.
\end{abstract}

\section{Introduction}
Many machine learning applications require real-time, online processing of data, a setting that often necessitates significant modifications to standard methods. Online adaptations of many different methods have been derived, including kernel machines \citep{kivinen2004online,lu2016large}, (kernel) least-squares \citep{kay1993fundamentals,engel2004kernel}, and Gaussian processes \citep{gijsberts2013real,stanton2021kernel}. Online learning has also been studied extensively from the optimization perspective \citep{orabona2019modern,hazan2022introduction}.

Online learning may be additionally complicated by the need for model selection, as it is rarely clear at the beginning of the learning process which model will perform the best. One option in this case is to train several models in parallel and \emph{ensemble} them. In a Bayesian context, Bayesian model averaging (BMA) \citep{hoeting1999bayesian} has long been used to ensemble online models, and works by weighting each ``expert'' model by its evidence.

More recently, \citet{lu2022incremental} showed how to adapt BMA to online Gaussian processes (GPs) in a technique they call \emph{incremental ensembles of GPs (IE-GPs)} (see also \citet{lu2020ensemble}). GPs are a flexible, non-parametric tool of Bayesian machine learning that enjoys universal approximation properties and principled uncertainty estimates \citep{rasmusses2005gpml}. Through a random Fourier feature (RFF) approximation to Gaussian processes \citep{rahimi2007random,lazaro2010sparse}, online learning can be performed \citep{gijsberts2013real}, with closed-form Bayesian model averaging updates and tractable regret analysis \citep{lu2022incremental}.

In addition to an online ensemble of GPs, \citet{lu2022incremental} illustrate the benefits of allowing random walks on model parameters, which they call \emph{dynamic IE-GPs (DIE-GPs)}. This can dramatically increase performance when the learning task changes slightly over time. For example, when tested on the SARCOS dataset \citep{rasmusses2005gpml}, which consists of learning the inverse dynamics of a robotic arm, DIE-GPs significantly outperform IE-GPs \citep{lu2022incremental}.

Taking a more global view, the ensembling of random feature GPs, as introduced by IE-GPs, has proven flexible and effective; extensions to the framework include Gaussian process state-space models \citep{liu2022inference}, deep Gaussian processes \citep{liu2023sequential}, and graph learning \citep{polyzos2021ensemblegraph}. Together with its extensions, DIE-GPs have been successfully used for Bayesian optimization \citep{lu2022surrogate} and causal inference \citep{liu2023detecting}.

However, reliance on the RFF approximation means that IE-GPs inherit the weaknesses of random feature GPs. Namely, the RFF approximation is a direct Monte Carlo approximation of the Wiener-Khinchin integral and therefore suffers harshly from the curse of dimensionality. We show in our results that on several real-world datasets, (D)IE-GPs provide similar or worse performance than simpler models, including online Bayesian linear regression and one-layer RBF networks.

In this paper, we introduce \emph{online ensembles of basis expansions (OEBEs)}, a generalization of IE-GPs that break their reliance on RFF GPs and improve performance over several real datasets. In particular, we make the following contributions:
\begin{enumerate}
    \item We note that the derivation of DIE-GPs does not rely at all on the RFF approximation other than it being a linear basis expansion. In fact, the same derivations and code can be reused to ensemble arbitrary Bayesian linear models with any design matrix. This allows not only the ensembling of the same type of model together, but also the ensembling of many different basis expansions (e.g. B-splines, one-layer RBF networks, and more). 
    \item We argue that, if GP regression is of interest, a GP with a generalized additive model (GAM) structure is often more appropriate. To this end, we use GAM Hilbert space Gaussian processes (HSGPs) \citep{solin2020hilbert}, which can be viewed as a quadrature rule of the same integral that the RFF approximation targets by direct Monte Carlo. Besides theoretical arguments, empirical results suggest HSGPs converge to the true approximated GP more quickly (in terms of the number of basis functions) than RFF GPs \citep{riutort2023practical}. We provide a similar empirical analysis.
    \item We introduce a novel method to combine static and dynamic models, enabling the use of principled posteriors of static methods when appropriate and expanding  expressiveness of dynamic methods otherwise. We show that this method is necessary by providing a constructive example on real data where the na\"ive approach to ensembling static and dynamic methods fails.
    \item We provide Jax/Objax \citep{jax2018github,objax2020github} code
    at \url{https://www.github.com/danwaxman/DynamicOnlineBasisExpansions} 
    that only requires the user to specify the design matrix, with several choices already implemented.
\end{enumerate} 

The rest of this paper is structured as follows: in \cref{sec:background}, we review background concepts in linear basis expansions, GP regression, spectral approximations of GPs, and BMA. These concepts are operationalized in \cref{sec:doebe}, where we introduce the OEBEs and several extensions, including applications to non-Gaussian likelihoods, and offer some brief theoretical remarks. We provide additional practical comments regarding the construction of OEBEs, including discussion on the contents of an ensemble and how to ensemble static and dynamic models in \cref{sec:choosing_models}. The proposed models are tested empirically in \cref{sec:experiments}. Finally, we offer some concluding remarks and future directions in \cref{sec:conclusion}.

\section{Background} \label{sec:background}
In this section, we begin by reviewing Bayesian linear regression and linear basis expansions (\cref{sec:lbe}). We then discuss GP regression (\cref{sec:gpr}). These topics are then connected via the sparse spectrum and Hilbert space approximations, which utilize linear basis expansions to approximate GPs (\cref{sec:gp_lbe}). Finally, we discuss BMA (\cref{sec:BMA}), which can be used to combine distributions from several different models.

\subsection{Linear Basis Expansions} \label{sec:lbe}
A common setting for Bayesian linear regression is with an i.i.d. Gaussian prior on a $D$-dimensional weight vector $\vtheta$ and i.i.d. Gaussian noise, i.e.,
\begin{gather}
    \rvy = \rmX^\top \vtheta + \vepsilon ,\label{eq:bayesian_linear_regression}\\
    \vtheta \sim \mathcal{N}(\vtheta; \mathbf{0}, \sigma_{\vtheta}^2 \mI_D) ,\label{eq:bayesian_linear_regression_weight_prior}\\
    \vepsilon \sim \mathcal{N}(\vepsilon; \mathbf{0}, \sigma_{\rvepsilon}^2 \mI_N),\label{eq:bayesian_linear_regression_noise}
\end{gather}
where $\rmX$ is a $D \times N$ matrix $[ \rvx_1 \, \cdots \, \rvx_N ]$ of input data, $\rvy$ is an $N$-dimensional output vector, ${\cal N}$ stands for a Gaussian distribution, $\vepsilon$ is an $N$-dimensional Gaussian vector, $\mI_D$ and $\mI_N$ are identity matrices of dimensions $D$ and $N$, respectively, and $\sigma_{\vtheta}^2$ and $\sigma_{\rvepsilon}^2$ are variances corresponding to $\vtheta$ and $\vepsilon$, respectively. If an intercept term is desired, $\rmX$ can be augmented with a row of ones and $\vtheta$ taken to be $(D + 1)$-dimensional.

By swapping each $\rvx_n$ with its mapping under a function $\vphi \colon \mathbb{R}^D \to \mathbb{R}^F$, the Bayesian linear regression of \cref{eq:bayesian_linear_regression} becomes a \emph{linear basis expansion}, with \cref{eq:bayesian_linear_expansion,eq:bayesian_linear_regression_weight_prior,eq:bayesian_linear_regression_noise} being replaced by
\begin{gather}
    \rvy = \mPhi^\top \vtheta + \vepsilon \label{eq:bayesian_linear_expansion},
    \\
    \label{eq:theta-prior}
    \vtheta \sim \mathcal{N}(\vtheta; \mathbf{0}, \sigma_{\vtheta}^2 \mI_F), \\
    \vepsilon \sim \mathcal{N}(\vepsilon; \mathbf{0}, \sigma_{\rvepsilon}^2 \mI_N),
\end{gather}
where $\mPhi$ is an $F \times N$ matrix, $[ \vphi(\rvx_1) \, \cdots \, \vphi(\rvx_N) ]$. The functions $\phi_k(\cdot) = \proj_k\big(\phi(\cdot)\big)$ are known as \emph{basis functions}. We refer to $\mPhi$ as the ``design matrix'' of $\rmX$ with its rows representing the basis functions.

\subsubsection{Predictive Distribution and Online Estimation} 
The model described by \cref{eq:bayesian_linear_expansion,eq:bayesian_linear_regression_weight_prior,eq:bayesian_linear_regression_noise} is conjugate and Gaussian, which means that efficient, closed-form posterior updates and predictive distributions are available (see, e.g., \citep[Chapter 3]{bishop2006prml}). In particular, let $\vmu_{\rvtheta}$ and $\mSigma_{\rvtheta}$ denote the mean and covariance function of the posterior $p(\vtheta | \rmX, \rvy) = \mathcal{N}(\vtheta | \vmu_{\rvtheta}, \mSigma_{\rvtheta})$. Then the posterior predictive distribution of $N_*$ test output/input points, $\rvy_*$ and $\rmX_*$, respectively, is given by another Gaussian, i.e.,
\begin{align}
    p(\rvy_* | \rvy, \rmX, \rmX_*) &= \mathcal{N}\big(\rvy_* | \vmu_{y_*}, \mSigma_{y_*} \big), \label{eq:blr_predictive} \\
    \vmu_{y_*} &= \mPhi_*^\top \vmu_{\rvtheta} \label{eq:blr_predictive_mean}, \\
    \mSigma_{y_*} &= \mPhi_*^\top \mSigma_\rvtheta \mPhi_* + \rmI_{N_*} \sigma_\rvepsilon^2 \label{eq:blr_predictive_covar},
\end{align}
where $\mPhi_*$ is the design matrix of $\rmX_*$.

Using the predictive distribution in \cref{eq:blr_predictive}, online regression can be performed by what is often called ``predicting then correcting'' in the filtering literature (see, e.g., \citep{särkkä2023bayesian}). To set up notation, let $\rvx_{1:t}$ and $y_{1:t}$ denote the first $t$ data points, $\mPhi_{t}$ denote the design matrix of $\rvx_{1:t}$, and $\vmu_{\rvtheta, t}, \mSigma_{\rvtheta, t}$ denote the parameters of the posterior $p(\vtheta | \rvx_{1:t}, y_{1:t}) = \mathcal{N}(\vtheta | \vmu_{\rvtheta, t}, \mSigma_{\rvtheta, t})$. Then given $\rvx_{t+1}$ and $y_{t+1}$, the posterior can be updated by the following two steps \citep{särkkä2023bayesian}:
\begin{itemize}
    \item \textit{(Predict)} Obtain $\mu_{y+1}$ and $\sigma^2_{y+1}$ from \cref{eq:blr_predictive_mean,eq:blr_predictive_covar},
    \item \textit{(Correct)} Update the posterior to $p(\vtheta | \rvx_{1:t+1}, y_{1:t+1}) = \mathcal{N}(\vtheta | \vmu_{\rvtheta, t+1}, \mSigma_{\rvtheta, t+1})$, where 
    \begin{gather}
        \vmu_{\rvtheta, t+1} = \vmu_{\rvtheta, t} + \frac{\mSigma_{\rvtheta, t} \vphi(\rvx_{t+1}) (y_{t+1} - \mu_{y_{t+1}})}{\sigma_{y_{t+1}}^2}, \label{eq:blr_corrected_mean} \\
        \mSigma_{\rvtheta, t+1} = \mSigma_{\rvtheta, t} - \frac{\mSigma_{\rvtheta, t} \vphi(\rvx_{t+1}) \vphi(\rvx_{t+1})^\top  \mSigma_{\rvtheta, t}}{\sigma_{y_{t+1}}^2}. \label{eq:blr_corrected_cov}
    \end{gather}
\end{itemize}

\subsubsection{Fitting Hyperparameters with Empirical Bayes} \label{sec:linear_mll}
The approach of Bayesian linear basis expansions is attractive in that it is simple, has closed-form posterior updates, and as we will see, can be quite effective. However, there are several hyperparameters to the model, namely the parameters of the priors $\sigma_\rvtheta^2$  and $\sigma_\epsilon^2$. It is also typically the case that $\vphi(\cdot)$ depends on additional hyperparameters, which we refer to as  $\boldsymbol{\psi}$.

A conventional approach to address the lack of knowledge of these parameters is to maximize the marginal likelihood; we can immediately identify this as the prior predictive, and using \cref{eq:blr_predictive}, calculate it as $p(\rvy | \rmX; \sigma_\rvtheta^2, \sigma_\rvepsilon^2, \boldsymbol{\psi}) = \mathcal{N}\big(\rvy | \mathbf{0}, \mPhi^\top \mPhi \sigma_\rvtheta^2 + \mI_{N} \sigma_\rvepsilon^2\big)$.\footnote{For numerical stability, when $F < N$, one should use the equivalent expression given in \citet[pp.167]{bishop2006prml}. Regardless of $F$ and $N$, calculations should be handled in the log domain using Cholesky decompositions.} This technique is known by several names, including ``empirical Bayes,'' ``type-II maximum likelihood estimation,'' and ``evidence maximization'' \citep[p. 611]{theodoridis2020machine}. Empirical Bayes has a long history of being effectively used for Bayesian linear basis expansions and Gaussian process regression \citep{rasmusses2005gpml,bishop2006prml}.

Another approach is rooted in the assumption of conjugate priors. For example, closed-form posterior updates are still possible if the joint prior of the linear parameters $\vtheta$ and the unknown variance $\sigma_{\rvepsilon}^2$ follows a multivariate normal–inverse Gamma distribution. In this case, it can be demonstrated that the marginal predictive distribution of $y_*$ (i.e., the predictive distribution after integrating out the unknowns $\vtheta$ and $\sigma_{\rvepsilon}^2$) is a Student’s t-distribution \citep{b-zellner71}. This analysis has previously been used for a generalizations of IE-GPs in \citet{liu2023sequential}.

\subsubsection{Examples of Basis Functions}
Many examples of basis functions exist, and we only provide a few here.

The simplest possible basis function is the identity function $\phi(\rvx) = \rvx$, in which case we recover ordinary linear regression. Slightly more sophisticated is the polynomial regression, where for scalar $x$, $\phi_{k+1}(x) = x^k$. In either case, an intercept term $a_0$ can be added by appending a row of ones to $\Phi$. Multivariable polynomial regression is formulated similarly with polynomials over $\rx_1, \dots, \rx_D$, but becomes unwieldy with growing dimension, as the number of terms grows exponentially. 

From polynomial regression come splines, which are piecewise polynomials between locations $\rvx_1, \dots, \rvx_K$ called knots \citep{deboor1978splines}. From the perspective of a generalized linear model, a design matrix can be calculated for so-called basis splines (B-splines) \citep{de1972calculating}; however, like polynomials, generalizing splines to high-dimensional settings is expensive and not straightforward. Due to their added complexity, B-splines are not included in the experiments of the current work, but we note they form an interesting topic for research.

Another common example is the set of (Gaussian) radial basis functions (RBFs). They depend on location parameters $\vmu_k$ and a length scale parameter $\ell$, i.e., 
\begin{equation*}
    \phi_k(\rvx; \vmu_k) = \exp\bigg(-\frac{(\rvx - \vmu_k)^\top(\rvx - \vmu_k)}{2 \ell^2}\bigg).
\end{equation*}
The locations $\vmu_k$ may be optimized or taken to be fixed by some initialization (for example, via a K-means clustering of the data).

In what follows, we will introduce GPs and two basis expansions designed to approximate them.

\subsection{Gaussian Process Regression} \label{sec:gpr}
Gaussian processes (GPs) are an infinite-dimensional generalization of the linear basis expansion model given above, in the sense that there may be an infinite number of basis functions $\phi_k(\cdot)$ \citep{rasmusses2005gpml}. Inference is then made possible by the \emph{kernel trick} \citep[pp. 12]{rasmusses2005gpml}, where basis functions are instead described by a kernel function $\kappa(\cdot, \cdot)$ which summarizes the pairwise relationships of inputs. Indeed, each of the linear basis expansion models explored above is a special case of a Gaussian process and can be written with the kernel trick using the notion of the \emph{equivalent kernel} \citep[pp. 159]{bishop2006prml}.

While this description of a GP is correct, it is more conventional to describe GPs as a prior distribution over functions. To this end, we must specify a mean function $m(\cdot)$ in addition to the kernel $\kappa(\cdot, \cdot)$; then GP regression (GPR) consists of estimating $f(\cdot)$ in the problem
\begin{gather}
    \rvy = f(\rvx) + \rvepsilon \label{eq:gp_model}, \\
    f(\rvx) \sim \mathcal{GP}(m(\rvx), \kappa(\rvx, \rvx')) \label{eq:gp_model_func}, \\
    \rvepsilon \iid \mathcal{N}(\rvepsilon; \mathbf{0}, \sigma_{\rvepsilon}^2). \label{eq:gp_model_noise}
\end{gather}

\subsubsection{Predictive Distribution}
Estimating the value of $f$ on test points in closed form is simple using the conjugate model of \cref{eq:gp_model,eq:gp_model_func,eq:gp_model_noise}. As before, let $\rmX_*$ be a set of test points and $f(\rmX_*)$ be test values to be inferred. Additionally, let $K(\rmX_1, \rmX_2)$ denote the matrix whose entries are given by $[K(\rmX_1, \rmX_2)]_{ij} = \kappa(\rmX_{1_{\cdot, i}}, \rmX_{2_{\cdot, j}})$. Then the predictive distribution is again given by a Gaussian distribution,
\begin{gather}
    p(f(\rmX_*) | \rvy, \rmX, \rmX_*) = \mathcal{N}(f(\rmX_*)  | \vmu_{f_*}, \mSigma_{f_*}), \label{eq:gp_predictive} \\
    \vmu_{f_*} = m(\rmX_*) +  K(\rmX_*, \rmX) \big[ K(\rmX, \rmX) + \sigma_\rvepsilon^2 \mI \big]^{-1} \big(\rvy - m(\rmX)\big), \label{eq:gp_predictive_mean} \\
    \mSigma_{f_*} = K(\rmX_*, \rmX_*) - K(\rmX_*, \rmX) \big[ K(\rmX, \rmX) + \sigma_\rvepsilon^2 \mI \big]^{-1} K(\rmX, \rmX_*). \label{eq:gp_predictive_cov}
\end{gather}
In contrast to the basis expansion case, the values $f(\rmX_*)$ are noiseless evaluations of the random function $f$. From \cref{eq:gp_model}, it is easy to see that the predictive distribution of the noisy quantity $\rvy_*$ can be obtained by simply adding $\mI_{N_*} \sigma_\varepsilon^2$ to the predictive covariance in \cref{eq:gp_predictive}.

These expressions are simple, but \cref{eq:gp_predictive_mean,eq:gp_predictive_cov} require solving the linear systems
\begin{equation*}
    \big[ K(\rmX, \rmX) + \sigma_\rvepsilon^2 \mI \big] \vb =  \big(\rvy - m(\rmX)\big) \quad \text{and} \quad 
    \big[ K(\rmX, \rmX) + \sigma_\rvepsilon^2 \mI \big] \mathbf{c} = K(\rmX, \rmX_*).
\end{equation*}
By using the Cholesky decomposition of $\big[ K(\rmX, \rmX) + \sigma_\rvepsilon^2 \mI \big]$ and solving the associated triangular systems, the calculation of \cref{eq:gp_predictive_mean,eq:gp_predictive_cov} can be made numerically stable and memory efficient relative to the calculation of inverses, but incurs $\mathcal{O}(N^3)$ time complexity \cite[App. A]{rasmusses2005gpml}. The poor computational scaling with respect to $N$ of solving these linear systems is a major obstacle to implementing GPs in practice and will motivate our discussion of scalable approximations. But first, we will specify the mean and covariance functions that are most commonly used.

\subsubsection{The Choice of Mean and Kernel}
Without loss of generality, the mean function $m(\cdot)$ can be chosen to be identically zero. The kernel function $\kappa(\cdot, \cdot)$ can in principle be any positive semi-definite function, but a few choices of kernel dominate the literature. The squared exponential (SE) kernel and the Mat\'ern-$3/2$ kernels are likely the most common \citep[Chapter 4]{rasmusses2005gpml}:
\begin{gather*}
    \kappa_{\text{SE}}(\rvx, \rvx') = \sigma^2_f \exp\bigg(\frac{-\lVert \rvx - \rvx' \rVert^2}{2\ell^2}\bigg), \\
    \kappa_{\text{Mat\'ern-}3/2}(\rvx, \rvx') = \sigma^2_f \bigg(1 + \frac{\sqrt{3} \lVert\rvx - \rvx'\rVert}{\ell}\bigg) \exp\bigg( \frac{- \sqrt{3} \lVert\rvx - \rvx'\rVert}{\ell}\bigg),
\end{gather*}
where $\ell$ and $\sigma_f$ are the length scale and process scale hyperparameters, and are both positive. The process scale $\sigma_f$ is directly related to prior variance $\sigma_{\rvtheta}^2$ of the linear basis expansion case, and in subsequent sections, we will refer to $\sigma_f$ as $\sigma_{\rvtheta}$ when convenient.

We can additionally allow for the length scale to vary in each dimension; this modification to kernels is called \emph{automatic relevance determination (ARD)}. By expressing
\begin{equation*}
    r_{\text{ARD}}^2(\rvx_1, \rvx_2) = \sum_{d=1}^D \frac{(x_{1,d}-x_{2,d})^2}{\ell_d^2} = \lVert \rvx_1 - \rvx_2 \rVert^2_{\diag(\ell_1, \dots, \ell_D)},
\end{equation*}
the SE-ARD and Mat\'ern-$3/2$-ARD kernels can be expressed as follows:
\begin{gather*}
    \kappa_{\text{SE-ARD}}(\rvx, \rvx') = \sigma^2_f \exp\big(-0.5 r_{\text{ARD}}^2\big), \\
    \kappa_{\text{Mat\'ern-}3/2\text{-ARD}}(\rvx, \rvx') = \sigma^2_f \big(1 + \sqrt{3} r_{\text{ARD}}\big) \exp\big(- \sqrt{3} r_{\text{ARD}}\big).
\end{gather*}

To fully specify a kernel function, optimal values of $\sigma_f$ and $\ell$ must also be determined. In this regard, it is common to employ the empirical Bayes approach discussed in  \cref{sec:linear_mll}. Again similar to the linear basis expansion case, the marginal likelihood is easily seen to be $\mathcal{N}(\rvy | \mathbf{0}, K(\rmX, \rmX) + \sigma_\rvepsilon^2 \rmI)$.

\subsection{Sparse Spectrum and Hilbert Space Approximations} \label{sec:gp_lbe}
While GPs are a flexible tool that makes minimal assumptions on the data-generating process, calculating the predictive distribution requires the (symbolic) inversion of an $N \times N$ matrix, or alternatively, the solving of the corresponding linear system. This is a $\mathcal{O}(N^3)$ operation, and so quickly becomes unwieldy as $N$ grows. A typical way to deal with this is the inclusion of \emph{inducing points}, a set of $M < N$ points that are chosen to summarize the data \citep{quinonero2005unifying}. An alternative approach is to take a finite basis expansion via the spectra of the GPs, as is done in the random Fourier feature (RFF) \citep{lazaro2010sparse} and Hilbert space GP (HSGP) \citep{solin2020hilbert} approximations.

\subsubsection{Random Fourier Feature Gaussian Processes}
The main idea of RFF GPs is to take a Monte Carlo approximation of the Wiener-Khinchin integral, which we describe below. The kernel of a GP is said to be \emph{stationary} if it depends only on the difference $\rvx - \rvx'$. In this case, Bochner's theorem guarantees a spectral representation of the kernel in the form of the Wiener-Khinchin integral \citep[pp. 82]{rasmusses2005gpml}:
\begin{equation} \label{eq:wiener_khinchin_integral}
    \kappa(\rvx, \rvx') = \int_{{\mathbb{R}}^D} \exp(2\pi i \rvs^\top (\rvx - \rvx') ) \, d\mu(\rvs)
\end{equation}
for some positive finite measure $\mu(\cdot)$.

In the case that $\mu(\cdot)$ admits a density $S(\rvs)$ w.r.t. the Lebesgue measure on $\mathbb{R}^D$, it is known as the \emph{power spectral density (PSD)} and the Wiener-Khinchin theorem says it is given by the Fourier transform of $g(\rvx - \rvx') = \kappa(\rvx, \rvx')$ \cite[pp. 82]{rasmusses2005gpml}. Then we can sample $\rvs_1, \dots, \rvs_F \sim S(\rvs)$ and approximate \cref{eq:wiener_khinchin_integral} by
\begin{equation} \label{eq:ssgp_approx_full}
    \kappa(\rvx, \rvx') \approx \sum_{m=1}^F \exp( 2\pi i \rvs_m^\top (\rvx - \rvx')  ).
\end{equation}
By noting the symmetry of the PSD, and to ensure real-valued kernels, we in practice sample $F/2$ spectral points $\rvs_1, \dots, \rvs_{F/2}$ and use both $\rvs_m$ and $-\rvs_m$ in \cref{eq:ssgp_approx_full} \citep{lazaro2010sparse}. This results in a linear basis expansion model a la \cref{eq:bayesian_linear_expansion} with the basis function 
\begin{equation} \label{eq:rff_gp_defn}
    \vphi(\rvx) = \sqrt{\frac{2}{F}} [ \sin(\rvx^\top \rvs_1) \, \cos(\rvx^\top \rvs_1) \, \cdots \, \sin(\rvx^\top \rvs_{F/2}) \, \cos(\rvx^\top \rvs_{F/2})]^\top \in \mathbb{R}^F.
\end{equation}

\subsubsection{Hilbert Space Gaussian Processes}
Hilbert space GPs \citep{solin2020hilbert} take a different approach, which can be viewed as a quadrature rule on the same integral of \cref{eq:wiener_khinchin_integral}. Their derivation is rooted in targeting the power spectral density and approximating it via Hilbert space methods from partial differential equations and can be found in \citet{solin2020hilbert}. We begin by specifying $\phi(\cdot)$ in the scalar case, $D = 1$.

For the Hilbert space approximation to be taken, a compact subset of $\Omega \subset \mathbb{R}$ must be specified where the approximation is valid. \citet{riutort2023practical} explore the case where $\Omega = [-L, L]$, with $L = c\max_\mathcal{D} |x|$, representing the maximum absolute value $x$ takes on over the training data $\mathcal{D}$ scaled by a value $c > 1$.

The resulting approximation is given by the basis functions \citep{solin2020hilbert}
\begin{equation}
    \phi_k(x) = S\bigg(\frac{k\pi}{2L}\bigg) \frac{\sin\big(\frac{k\pi}{2L}(x + L)\big)}{\sqrt{L}}, \quad k=1, \dots, F,
\end{equation}
where we recall $S(s)$ to be the PSD of the kernel.

The multivariable case is written similarly \citep{solin2020hilbert}. However, the approximation suffers from exponential growth, in the sense that using $F_1, \dots, F_D$ basis functions for each input dimension results in a $F_1 \cdot F_2 \cdots \cdot F_D$ dimensional embedding. As a result, the approximation is typically more efficient than the exact GP only when $D \lesssim 3$ \citep{riutort2023practical}.

In low dimensions ($D \lesssim 3$), the HSGPs empirically converge to the true GP much more quickly than RFF GPs \citep{solin2020hilbert,riutort2023practical}. While HSGPs are not tractable in high dimensions, we note that RFF GPs also suffer harshly from the curse of dimensionality, which is typical in Monte Carlo sampling. In \cref{sec:why_gams}, we argue why better performance in the univariate case is sufficiently attractive. 

\subsection{Bayesian Model Averaging} \label{sec:BMA}
Bayesian model averaging is a long-standing technique to combine Bayesian models \citep{hoeting1999bayesian}. With a sequence of models $\mathcal{M}_1, \dots, \mathcal{M}_M$, BMA computes the distribution $p(y | \mathcal{D})$ by a process that simulates the marginalization of the choice of model, i.e.,
\begin{equation} \label{eq:BMA_defn}
    p(y | \mathcal{D}) \propto \sum_{m=1}^M \Pr(\mathcal{M}_m | \mathcal{D}) p(y | \mathcal{D}, \mathcal{M}_m).
\end{equation}
The resulting distribution can be understood as a mixture model with mixands $p(y | \mathcal{D}, \mathcal{M}_m)$ and weights $w_m = \Pr(\mathcal{M}_m | \mathcal{D})$. 

BMA is simple and asymptotically optimal when the true data-generating process is one of the models $\mathcal{M}_1, \dots, \mathcal{M}_M$ \citep{hoeting1999bayesian}. This is on account of ``collapsing'' in the limit of infinite data, in the sense that $\Pr(\mathcal{M}_m | \mathcal{D})$ approaches $1$ for the $\mathcal{M}_m$, which minimizes a statistical distance to the true data-generating distribution \citep{yao2018using}. 

\section{Dynamic Online Ensembles of Basis Expansions} \label{sec:doebe}
In this section. we re-introduce the work of \citet{lu2022incremental} with no explicit reference to GPs. This comprises an online learning and ensembling step (\cref{sec:oebe}), as well as an adjustment for problems where dynamics (\cref{sec:random_walk}) or switching (\cref{sec:switching}) are present. Finally, we discuss adaptations to non-Gaussian likelihoods (\cref{sec:nongaussian_likelihoods}) and theoretical regret analysis (\cref{sec:theoretical_aspects}).

\subsection{Online Ensembles of Basis Expansions} \label{sec:oebe}
The Online Ensemble of Basis Expansions (OEBE) consists of learners with basis expansions $\phi^{(m)}(\cdot)$ and parameters $\vtheta^{(m)}$ for $m=1, \dots, M$, and a meta-learner based on BMA. The meta-learner initializes weights $\vw_t$ to a uniform prior over models, and then learning begins.

After the retrieval of a new data pair $(\rvx_{t+1}, y_{t+1})$, each learner predicts $p(y_{t+1} | \rvx_{1:t}, \rvy_{1:t}, \rvx_{t+1})$ via \cref{eq:blr_predictive,eq:blr_predictive_mean,eq:blr_predictive_covar}. The BMA weights are updated as follows:
\begin{equation} \label{eq:weight_update}
    w^{(m)}_{t+1} \propto w^{(m)}_t p^{(m)}(y_{t+1} | \rvx_{1:t}, \rvy_{1:t}, \rvx_{t+1}).
\end{equation}
Note that, if a weight $w^{(m)}_t$ (numerically) collapses to $0$ at some time $t$, it will remain at $0$.

Next, posteriors $p(\vtheta^{(m)}_{t+1} | \rvx_{1:t+1}, \rvy_{1:t+1})$ are calculated via \cref{eq:blr_corrected_mean,eq:blr_corrected_cov}. Finally, we note that the minimum mean squared error (MMSE) estimate\footnote{In principle, we could record the entire Gaussian mixture model as the estimate. Because of the rapid collapse of BMA to a single mixand, it does not much change the results, and simplifies computations.} of $y_{t+1}$ is given by %
\begin{gather}
    \mu_{y_{t+1}} = \sum_{m} w^{(m)}_{t} \mu_{y_{t+1}}^{(m)}, \label{eq:oebe_mmse_mean}
    \end{gather}
    and the variance of this estimate by 
    \begin{gather}
    \sigma_{y_{t+1}}^2 = \sum_m w^{(m)}_{t} \bigg({\sigma_{y_{t+1}}^{(m)}}^2 + \big(\mu_{y_{t+1}} - \mu_{y_{t+1}}^{(m)}\big)^2\bigg).
\end{gather}

The OEBE contains as a special case the IE-GP if $\phi^{(m)}(\cdot)$ is given by \cref{eq:rff_gp_defn} for every $m$. However, the functions $\phi^{(m)}(\cdot)$ can be of any type of basis expansion, including different types of basis expansions for different values of $m$.

Note that while each model is linear in $\rvtheta^{(m)}$, each $\rvtheta^{(m)}$ corresponds to a different (nonlinear) basis expansion $\phi^{(m)}(\cdot)$. Therefore each $\rvtheta^{(m)}$ represents a different quantity, perhaps even defined on a different space. In particular, even with the MMSE estimate \cref{eq:oebe_mmse_mean}, the resulting estimate is a mixture of several \emph{different} nonlinear functions of $\rvx_{t}$.

We remark that with the MMSE estimate, the resulting prediction is linear in the ``stacked'' vector $\rvtheta = [ \rvtheta^{(1)} \, \cdots  \, \rvtheta^{(m)} ]$, but this does not reflect how each model is trained. For example, an ensemble consisting of the basis expansions 
\begin{equation*}
    \phi^{(1)}(\rvx) = \begin{bmatrix} 1 \\ x\end{bmatrix} \quad \text{and} \quad \phi^{(2)}(\rvx) = \begin{bmatrix} x^2 \end{bmatrix}
\end{equation*}
is not equivalent to the quadratic basis expansion model. Depending on the basis expansions used, this may result in a less expressive model (as in the example above), but it also lessens the computational burden of empirical Bayes significantly, which scales cubically with $F$.

\subsection{Adding a Random Walk on the Parameters} \label{sec:random_walk}
If the dynamics of a problem are expected to change in the dataset, we can add a random walk on the parameters $\vtheta$. For the RFF GP case, this was referred to as a \emph{dynamic IE-GP (DIE-GP)}, so we follow in calling the generalization a \emph{dynamic OEBE (DOEBE)}. To differentiate OEBE from DOEBE, we will often refer to OEBE models as ``static,'' juxtaposed to ``dynamic.''

Incorporating dynamics can be done simply by injecting a noise term ${\vepsilon_{\textrm{rw},t}^{(m)}}$ with variance ${\sigma^{(m)}_\text{rw}}^2$ on the posterior $\vtheta^{(m)}$ prior to prediction.
\begin{equation} \label{eq:chapman_kolmogorov}
    p(\vtheta^{(m)}_{t+1} | \rmX_{1:t}, \rvy_{1:t}) = \mathcal{N}(\vtheta^{(m)}_{t+1} | \vmu_{\vtheta, t}^{(m)}, \mSigma_{\vtheta, t}^{(m)} + {\sigma^{(m)}_\text{rw}}^2 \mI).
\end{equation}

Note that this slightly differs from the formulation in \citet{lu2022incremental}, which does not allow ${\sigma^{(m)}_\text{rw}}^2$ to vary with $m$. We somewhat generalize to allow OEBE and DOEBE models to be ensembled together, as in \cref{sec:switching_for_dyn}. Additionally, note that OEBE is the special case where noise is degenerate, i.e., when $\sigma_{\textrm{rw}}^{(m)} = 0$.

From a signal processing perspective, each model in the DOEBE can be viewed as the Kalman filter formulation for a linear basis expansion with drift, with \cref{eq:chapman_kolmogorov} being a closed form solution of the Chapman-Kolmogorov equation \citep[pp. 34]{särkkä2023bayesian}. Accordingly, the $m$th learner corresponds to a state space model
\begin{align*}
    \vtheta^{(m)}_{t+1} &= \vtheta^{(m)}_{t} + \vepsilon_{\textrm{rw},t}^{(m)} \\
    y^{(m)}_t &= \vphi^{(m)}(\rvx_t)^\top \vtheta^{(m)}_t + \varepsilon_t
\end{align*}
where $\vepsilon_{\textrm{rw},t}^{(m)} \iid \mathcal{N}(\vepsilon_{\textrm{rw},t}^{(m)} ; \mathbf{0}, {\sigma_{\textrm{rw}}^{(m)}}^2 \mI)$. The state-space formulation shows how this random walk may be even more complex, i.e.,
\begin{equation*}
    \vtheta^{(m)}_{t+1} = \vtheta^{(m)}_{t} + \mL^{1/2} \vepsilon_{t},
\end{equation*}
where $\mL^{1/2}$ is the Cholesky factor of a positive semi-definite matrix and $\vepsilon_{t}$ is white Gaussian noise. While this more general form may be useful in some applications, particularly when domain expertise is available, we choose to focus on the special case $\mL = {\sigma_{\textrm{rw}}^{(m)}}^2 \mI$ in the current work for simplicity.

From this state-space perspective, we can also explore a formal justification for placing a random walk on the parameters. Namely, if the estimation task is expected to change slightly over time, it is useful to model this change as a drift in the model parameters. From a statistical perspective, the posterior distribution of \cref{eq:chapman_kolmogorov} merely appears as a ``more conservative'' estimate of the posterior, but this is beneficial in the recursive setting: since we use the posterior at time $t$ as the prior for time $t+1$, a more conservative posterior at time $t$ corresponds exactly to a weaker prior at time $t+1$. This allows for better adaptation to new data by ``forgetting'' old data. 

As the weight update step remains unchanged, the DOEBE inherits the weight collapse property of the OEBE. Likewise, the DOEBE contains the DIE-GP as a special case when the noise hyperparameters $\sigma_\textrm{rw}^{(m)}$ are all the same and every basis function $\vphi^{(m)}(\cdot)$ is given by \cref{eq:rff_gp_defn}.

\subsection{Adding a Random Walk to the BMA Weights} \label{sec:switching}
Allowing for a random walk on the BMA weights
is also a simple addition, which becomes important for ensembling dynamic and static models. Analogous to the Gaussian random walk on $\vtheta$, dynamics can be introduced to $\vw_t$ via a discrete-time Markov chain, specified by a stochastic matrix $\mQ$:
\begin{equation} \label{eq:rw_weights}
    \vw_{t+1} = \mQ \vw_{t} \iff w_{t+1}^{(m)} = \sum_{m'=1}^M Q_{mm'} w_t^{(m')}.
\end{equation}
We follow \citet{lu2022incremental} in calling this a \emph{switching (dynamic) OEBE (S(D)OEBE)}. 

The term ``switching'' is suggestive of applications where regime switching (i.e., sudden changes from one generating model to another) may be present. Additionally, \citet{lu2022incremental} show that adding switching allows for regret analysis in adversarial settings.

However, the more important property introduced by \cref{eq:rw_weights} to the current work is that the revival of collapsed weights is possible \citep{lu2022incremental}. In particular, note that even if $w_t^{(m)}$ has (numerically) collapsed to $0$, it can be revived so long as $Q_{mm'} w_t^{(m')}$ is positive for some $m'$. This allows us to preserve the full ensemble structure after the numerical collapse of the BMA weights, which often occurs in the first few thousand samples.

\begin{figure}[t]
    \centering
    \begin{tikzpicture}[thick]
        \tikzstyle{surround} = [thick,rounded corners=2mm]

        \node (w_sdoebe) at (0.336\linewidth, 0.6) {\color{sdoebecolor}$\Tilde{\vw}_{t+1} = \mQ \vw_t$}; 

        \node (theta_evolve) at (0.367\linewidth, -0.2) {\color{doebecolor}$\vtheta^{(m)}_{t+1} = \vtheta^{(m)}_{t} + \vepsilon_{\textrm{rw},t}^{(m)}$};

        \node (predictive) at (0.4\linewidth, -1.0) {\color{oebecolor}$y^{(m)}_t = \vphi^{(m)}(\rvx_t)^\top \vtheta^{(m)}_t + \varepsilon_t$};
        \node (ensemble) at (0.555\linewidth, -1.5) {\color{oebecolor}$p(y_{t+1} | \rmX_{1:t}, \rvy_{1:t}, \rvx_{t+1}) = \sum_m w_t^{(m)} p^{(m)}(y_{t+1} | \rmX_{1:t}, \rvy_{1:t}, \rvx_{t+1})$};
        \node (w_evolve) at (0.4525\linewidth, -2.0) {\color{oebecolor}$w^{(m)}_{t+1} \propto \Tilde{w}^{(m)}_{t+1} p^{(m)}(y_{t+1} | \rmX_{1:t}, \rvy_{1:t}, \rvx_{t+1})$};

        \node (oebe_label) at (0.8\linewidth, -2.0) {\color{oebecolor} OEBE};
        \node (doebe_label) at (0.8 \linewidth, -0.2) {\color{doebecolor} DOEBE};
        \node (sdoebe_label) at (0.8 \linewidth, 0.6) {\color{sdoebecolor} SDOEBE};
        
        \node[surround,draw=oebecolor,dashed] (OEBE_box) [fit = (predictive) (ensemble) (w_evolve) (oebe_label)] {};
        \node[surround,draw=doebecolor,dashed] (DOEBE_box) [fit = (OEBE_box) (theta_evolve) (doebe_label)] {};
        \node[surround,draw=sdoebecolor,dashed] (SDOEBE_box) [fit = (w_sdoebe) (DOEBE_box) (theta_evolve) (sdoebe_label)] {};
    \end{tikzpicture}
    \caption{The hierarchy of the proposed models. All models have the same distribution for $y_t$ conditioned on $\vtheta_t$, while DOEBE adds a random walk to parameters, and SDOEBE adds a random walk to BMA weights.} \label{fig:relations_between_models}
\end{figure}
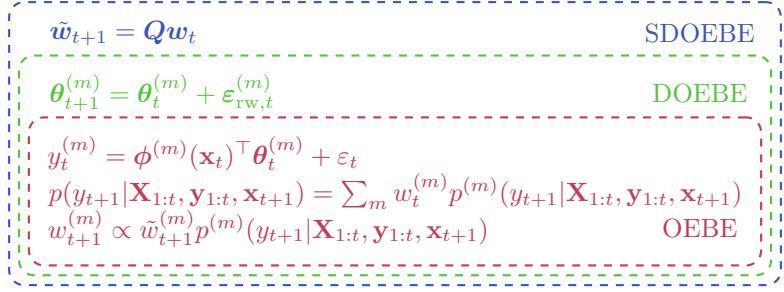

The OEBE, DOEBE, and SDOEBE models, as well as the relationships between them, are summarized in \cref{fig:relations_between_models}. 

\subsection{Inference With Non-Gaussian Likelihoods} \label{sec:nongaussian_likelihoods}
One seeming weakness of the DOEBE approach is the necessity of Gaussian likelihoods. However, by casting the DOEBE as a linear Gaussian state-space model, we are able to use several approximations for arbitrary non-Gaussian likelihoods.

The simplest such approximation is the Laplace approximation \citep{kass1991laplace}, which is used for IE-GPs. The Laplace approximation takes a second-order Taylor series approximation of the log-posterior $\log p(\rvtheta^{(m)}_{t} | \rmX_{1:t}, \rvy_{1:t})$ centered around the MAP estimate of $\vtheta$, which in turn is the solution to the optimization problem
\begin{equation} \label{eq:laplace_approx_mean}
    \vtheta^{(m)}_{t; \text{MAP}} = \argmax_\vtheta p(\vtheta^{(m)}_{t} | \rmX_{1:t}, \rvy_{1:t}).
\end{equation}
The resulting approximation is a Gaussian distribution whose covariance is given by the inverse of the Hessian matrix for $-\log p(\rvtheta^{(m)}_{t} | \rmX_{1:t}, \rvy_{1:t})$, i.e., 
\begin{equation} \label{eq:laplace_approx_cov_1}
    \left(\mSigma_{\vtheta, t; \text{Laplace}}^{(m)}\right)^{-1} = -\nabla^2_{\vtheta^{(m)}} \log p(\rvtheta^{(m)} | \rmX_{1:t}, \rvy_{1:t}) \big\rvert_{\vtheta^{(m)} = \vtheta^{(m)}_{t; \text{MAP}}}.
\end{equation}
By noting the linearity of the Laplacian and factorizing the posterior $\log p(\rvtheta^{(m)}_{t} | \rmX_{1:t}, \rvy_{1:t})$, the computation of \cref{eq:laplace_approx_cov_1} can be done recursively using only the Hessian with respect to the likelihood, i.e.,
\begin{equation}
    \left(\mSigma_{\vtheta, t; \text{Laplace}}^{(m)}\right)^{-1} = \left(\mSigma_{\vtheta, {t-1}; \text{Laplace}}^{(m)}\right)^{-1} - \nabla^2_{\vtheta^{(m)}} \log p(y_t | \rvtheta^{(m)}, \rvx_{t}) \big\rvert_{\vtheta^{(m)} = \vtheta^{(m)}_{t; \text{MAP}}}.
\end{equation}

In this work, our principal concern is (Gaussian) regression, so we do not undertake extensive experiments for classification. Nevertheless, we provide a simple experiment using the Laplace approximation in \cref{app:classification_exp} where the proposed method is tested on a toy dataset. We show performance comparable to other recent online GP approximations, and demonstrate that dynamic models can outperform static models when the distribution of data changes gradually over time. 

If classification is of particular concern, several other approximation methods are available that one could experiment with. For example, particle filtering \citep{djuric2003particle} or posterior linearization filters \citep{tronarp2018iterative} offer other methods for inference in state-space models with non-Gaussian measurement models. We believe the adaptation of DOEBE to these non-Gaussian settings is an interesting avenue for future work, and speculate that they may provide better results than the Laplace approximation.

\subsection{Theoretical Aspects} \label{sec:theoretical_aspects}
In \citet{lu2022incremental}, several online regret bounds were derived. The first regret bound is similar to that of \citet{kakade2004online} and bounds the cumulative loss between the IE-GP and any individual expert of the ensemble. This result does not directly depend on the use of RFF GPs in particular and is easily adapted to the OEBE, with a minor additional assumption on each $\phi^{(m)}$.

For this, we will define the loss function $\mathcal{L}(f(\rvx_\tau); y_\tau)$ as the negative log-likelihood of data $y_\tau$ using estimator $f(\rvx_\tau)$. We will define the loss of expert $m$ at time $t+1$, $l_{t+1}^{(m)}$, as the negative predictive log-likelihood of $y_{t+1}$ under the posterior predictive of estimator $m$ at time $t$. Likewise, $\ell_{t+1}$ is the negative predictive log-likelihood of $y_{t+1}$ under the posterior predictive of the OEBE estimator at time $t$. We are then able to bound the gap between the cumulative loss of any expert $m$ with fixed parameters $\rvtheta^{(m)}_*$ and the OEBE estimator. 

\begin{theorem}[Online Regret Bound for OEBE] \label{thm:online_bound}
    Let the negative log-likelihood $\mathcal{L}(\cdot; y_\tau)$ be $\mathcal{C}^2$ with bounded second derivative, i.e., $\lvert \frac{d^2}{dz_\tau} \mathcal{L}(z_\tau; y_\tau) \rvert \leq c$ for all $z_\tau$ and some constant $c$. Let us consider an OEBE with prior 
    \begin{equation*}
        p(\rvtheta^{(m)}) = \mathcal{N}\left(\rvtheta^{(m)} ; \mathbf{0}, {\sigma_\rvtheta^{(m)}}^2 \mI_{F_M}\right),
    \end{equation*}
    where $F_m = \dim \rvtheta^{(m)}$. Furthermore, assume that for any $\rvx$, the norm of $\phi^{(m)}(\rvx)$ is bounded by unity.
    
    Then the cumulative negative log-likelihood incurred by the OEBE is bounded with respect to any single expert $m$ with fixed parameters $\rvtheta^{(m)}_*$ by
    \begin{equation}
        \sum_{\tau=1}^T \ell_t - \sum_{\tau=1}^T \mathcal{L}(\vphi^{(m)}(\rvx_\tau)^\top \rvtheta_*^{(m)}; y_\tau) \leq \frac{\lVert \rvtheta_*^{(m)} \rVert^2}{2\sigma_{\rvtheta}^2} + \frac{F_m}{2} \log\left( 1 + \frac{T c \sigma_{\rvtheta^{(m)}}^2}{F_m} \right) + \log M.
    \end{equation}
\end{theorem}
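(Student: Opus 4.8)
The plan is to split the regret into an \emph{ensembling} gap and a \emph{single-expert} gap and bound each separately. Writing $l_\tau^{(m)}$ for the negative predictive log-likelihood of expert $m$ at step $\tau$, I add and subtract $\sum_\tau l_\tau^{(m)}$ so that the left-hand side becomes $\big(\sum_\tau \ell_\tau - \sum_\tau l_\tau^{(m)}\big) + \big(\sum_\tau l_\tau^{(m)} - \sum_\tau \mathcal{L}(\vphi^{(m)}(\rvx_\tau)^\top\rvtheta_*^{(m)}; y_\tau)\big)$. The first bracket measures the ensemble against a single expert, and the second measures one online Bayesian learner against its best fixed parameter; I expect these to contribute $\log M$ and the remaining two terms, respectively.

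For the first bracket I would run the standard exponential-weights (aggregating) argument. The OEBE predictive at step $t$ is exactly the normaliser $Z_t = \sum_{m'} w_{t}^{(m')} p^{(m')}(y_{t+1}\mid \rmX_{1:t},\rvy_{1:t},\rvx_{t+1})$ implicit in the weight update \cref{eq:weight_update}, so $\ell_{t} = -\log Z_{t}$. Unrolling the recursion gives $w_T^{(m)} = w_0^{(m)}\prod_{t} p^{(m)}(y_{t}\mid\cdots)/Z_{t}$; taking logarithms, using $w_T^{(m)}\le 1$ and the uniform initialisation $w_0^{(m)} = 1/M$, and rearranging yields $\sum_\tau \ell_\tau \le \sum_\tau l_\tau^{(m)} + \log M$ for \emph{every} $m$ simultaneously.

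For the second bracket, the per-step predictive losses of a single Bayesian expert collapse into a batch marginal likelihood by the probabilistic chain rule: $\sum_\tau l_\tau^{(m)} = -\log p^{(m)}(\rvy_{1:T}\mid \rmX_{1:T}) = -\log \int p(\rvtheta)\exp\big(-\sum_\tau \mathcal{L}(\vphi^{(m)}(\rvx_\tau)^\top\rvtheta; y_\tau)\big)\,d\rvtheta$, using $p(y\mid z) = \exp(-\mathcal{L}(z;y))$ with linear predictor $z = \vphi^{(m)}(\rvx)^\top\rvtheta$. It thus suffices to lower bound this integral. The key move is to replace $\mathcal{L}$ by a quadratic surrogate: since $|\mathcal{L}''|\le c$ implies $\mathcal{L}''\le c$, Taylor's theorem about $z_\tau^* = \vphi^{(m)}(\rvx_\tau)^\top\rvtheta_*^{(m)}$ gives $\mathcal{L}(\vphi^{(m)}(\rvx_\tau)^\top\rvtheta; y_\tau) \le \mathcal{L}(z_\tau^*;y_\tau) + \mathcal{L}'(z_\tau^*)\,\vphi^{(m)}(\rvx_\tau)^\top(\rvtheta-\rvtheta_*^{(m)}) + \tfrac{c}{2}\big(\vphi^{(m)}(\rvx_\tau)^\top(\rvtheta-\rvtheta_*^{(m)})\big)^2$, which lower-bounds the integrand by a Gaussian in $\rvtheta$.

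Completing the square of this Gaussian against the prior $\mathcal{N}(\mathbf{0}, \sigma_{\rvtheta^{(m)}}^2\mI_{F_m})$ evaluates the integral in closed form. After the prior normaliser cancels, the $\sigma_{\rvtheta^{(m)}}$ factors recombine with the log-determinant into $-\tfrac12\log\det\big(\mI_{F_m} + c\,\sigma_{\rvtheta^{(m)}}^2\mathbf{S}\big)$ with $\mathbf{S} = \sum_\tau \vphi^{(m)}(\rvx_\tau)\vphi^{(m)}(\rvx_\tau)^\top$, the prior contributes the penalty $-\|\rvtheta_*^{(m)}\|^2/(2\sigma_{\rvtheta^{(m)}}^2)$, and a leftover quadratic form (nonnegative, since its matrix is positive definite) is simply discarded. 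Finally, $\|\vphi^{(m)}\|\le 1$ gives $\Tr \mathbf{S} \le T$, and concavity of $\log$—maximising $\sum_i\log\lambda_i$ subject to $\sum_i\lambda_i \le F_m + c\,\sigma_{\rvtheta^{(m)}}^2 T$ at the equal-eigenvalue point—bounds $\log\det(\mI_{F_m} + c\,\sigma_{\rvtheta^{(m)}}^2\mathbf{S}) \le F_m\log(1 + Tc\,\sigma_{\rvtheta^{(m)}}^2/F_m)$. Summing the two brackets gives the stated inequality. I expect the main obstacle to be the bookkeeping in the Gaussian-integral step: verifying that the determinant and prior-variance factors recombine exactly into $\det(\mI_{F_m} + c\,\sigma_{\rvtheta^{(m)}}^2\mathbf{S})$, and that the leftover cross term is genuinely nonnegative so it can be dropped; the trace and concavity estimates afterward are routine.
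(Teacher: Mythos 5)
Your proposal is correct, and Step 1 (the exponential-weights telescoping of the BMA weights from the uniform prior, giving $\log M$) is identical to the paper's. Where you genuinely diverge is in the single-expert part. The paper follows \citet{kakade2004online}: it introduces a variational distribution $q(\rvtheta^{(m)})=\mathcal{N}(\rvtheta^{(m)};\rvtheta^{(m)}_*,\xi^2\mI_{F_m})$, bounds $\sum_\tau l^{(m)}_\tau-\mathbb{E}_q[\mathcal{L}_{\rvtheta^{(m)}}]$ by $\KL\left(q\,\|\,p\right)$ via the compression-lemma argument, controls $\mathbb{E}_q[\mathcal{L}_{\rvtheta^{(m)}}]-\mathcal{L}_{\rvtheta^{(m)}_*}$ by a Taylor expansion using $|\mathcal{L}''|\le c$ and $\lVert\vphi^{(m)}(\rvx)\rVert\le 1$, and finally optimizes the free parameter $\xi^2$. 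You instead majorize the loss by its second-order Taylor bound around $z_\tau^*$ and evaluate the resulting Gaussian mixture integral in closed form, dropping the nonnegative completed-square term $\tfrac12\vb^\top\mA^{-1}\vb$ and bounding $\log\det(\mI_{F_m}+c\sigma_{\rvtheta^{(m)}}^2\mathbf{S})$ by concavity and $\Tr\mathbf{S}\le T$; I verified this bookkeeping and it is sound, landing exactly on the stated bound with no variational parameter to tune. The two routes are essentially dual: your quadratic surrogate makes the integral explicitly Gaussian, which is what the paper's optimal choice of $\xi^2$ implicitly exploits; your version is arguably cleaner here (it sidesteps the $\xi$-optimization, where the paper's write-up in fact carries a sign slip in the KL term that silently cancels), while the paper's variational form generalizes more readily when the comparator is a distribution rather than a point and is the template reused for the RKHS and switching bounds it cites.
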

\begin{proof}
    The proof follows immediately from the proof of Lemma 1 in \citet[Sec. 9.1]{lu2022incremental}, which makes no reference to RFF GPs besides their being a (bounded) basis expansion. For notational consistency and completeness, we provide a proof in \cref{app:regret_analysis}.
\end{proof}

The authors of \citet{lu2022incremental} then show regret bounds with respect to any learner in the RKHS of any expert by essentially using the uniform convergence of RFF GPs \citep{rahimi2007random}. A higher probability bound could also be derived with the classical results of \citet{sutherland2015error}. Similar uniform convergence results exist for HSGPs \citep{solin2020hilbert}, which we posit can be used to derive similar regret bounds with respect to the RKHS of each expert. However, we do not discuss them here, as our main focus is on the ensembling of several different types of basis expansions.

The switching regret bounds in Lemma 2 and Lemma 3 of \citet{lu2022incremental} also do not depend on the RFF approximation and can be adapted to SOEBE. The resulting bounds require more bookkeeping on account of the possibly varying dimension of each $\rvtheta^{(m)}$, but the proofs again remain essentially the same as for IE-GPs. We also omit derivation of these bounds to not distract from our main interest, which is more practical in nature.

\section{Choosing Models for Ensembling} \label{sec:choosing_models}
In this section, we make general comments on how an ensemble can be constructed. This includes how to sample kernel hyperparameters (\cref{sec:get_hypers}), how to handle problems where it is unclear if a change in the prediction problem will occur (\cref{sec:switching_for_dyn}), and why one may want to include generalized additive models in the ensemble (\cref{sec:why_gams}).

\subsection{Sampling Hyperparameters from the Marginal Likelihood} \label{sec:get_hypers}
For IE-GPs, some initial data $\mathcal{D}_0$ is set aside to determine $\sigma_\rvtheta^2$ and $\sigma_\rvepsilon^2$ via empirical Bayes (\cref{sec:linear_mll}). There is no guidance given on how to obtain kernel hyperparameters. To allow for expressive basis functions depending on many hyperparameters, we allow all hyperparameters to be optimized with respect to the marginal likelihood.

Clearly, there is no diversity among the same type of model in the ensemble if we simply take the estimates via empirical Bayes. We provide a two-pronged approach to promote ensemble diversity, targeting both the multimodality and probabilistic nature of the marginal likelihood.

The first strategy hinges on the observation that marginal likelihoods are often multimodal \citep[pp. 115]{rasmusses2005gpml}. To leverage this fact, we optimize with several initial sets of hyperparameters. In practice, we find that initializing models with length scales to $\ell_d = s (\max_{\mathcal{D}_0} x_d - \min_{\mathcal{D}_0} x_d)$ for $s \in \{0.1, 1, 10\}$ works well.\footnote{The $s=1$ case is used as an initialization in the code of \citet{lazaro2010sparse}.}

Second, we note that the marginal likelihood is indeed a valid probability distribution and propose taking a Laplace approximation of the marginal likelihood at each local maximum. In particular, we sample $M-1$ sets of hyperparameters from the Laplace approximation, in addition to keeping the empirical Bayes estimate, for a total of $M$ estimates at each mode.\footnote{For optimization, each constrained hyperparameter is transformed to an unconstrained one. The sampling occurs in the unconstrained space.} This maintains the same spirit of using empirical Bayes for the variance terms whilst allowing for a legitimate ensemble. Additionally, if the Hessian can be computed quickly (e.g., using automatic differentiation when the number of hyperparameters is small), this sampling step is also quick to compute. We observe that this provides better accuracy, significant at the $p < 0.05$ level according to a Wilcoxon signed-rank test (\cref{app:sampling_vs_ML}).

It is tempting to retrain the hyperparameters periodically as an additional method of dealing with a changing task. While this may be worthwhile in some applications, it has the unfortunate consequence of changing the computational complexity of estimation over time. In particular, if the ensemble is retrained periodically, the online algorithm will drastically increase its complexity periodically, which is often undesirable in an online algorithm. Continual learning of hyperparameters, for example with online stochastic gradient descent also adds unfortunate complexities by changing the measurement model of the state-space without changing the current posterior. In our experiments, we therefore fix all hyperparameters after the initialization above is performed.

\subsection{Ensembles of Ensembles for Dynamic and Static Models} \label{sec:switching_for_dyn}
Dynamic models are expected to perform significantly better than static models if a progressive change in the online estimation problem occurs. However, if the data consists of i.i.d. draws from a GP, it is clear that dynamic models will overestimate the predictive variance (see \cref{eq:chapman_kolmogorov}). Therefore, it is desirable to be able to ensemble static and dynamic models.

A straightforward solution is to use a DOEBE model where ${\sigma^{(m)}_{\text{rw}}}^2$ is zero for some models and nonzero for others. However, recall from \cref{sec:BMA} that a central property of BMA is that it collapses in the limit of infinite data to the model ``closest'' to the generating model. In practice, this collapse happens rapidly, often within a few thousand samples in our experiments, and weights may become exactly zero due to numerical underflow or implementation. This is highly problematic, as this collapse can (numerically) occur before it becomes clear that dynamic models are necessary.

To theoretically avoid the collapse of BMA weights, any matrix $\mQ$ which describes an ergodic Markov chain is sufficient.\footnote{This follows immediately from the definition of ergodicity and that the Gaussian distribution's support is all of $\mathbb{R}$.} However, the collapse of BMA weights to zero is a numerical property rather than a theoretical property in the first place. Moreover, we can incorporate the intuition that the methods that perform best statically are likely to also perform best dynamically, and vice versa. 

We propose an SDOEBE with a particular form of $\mQ$ which incorporates both these points. In particular, we specify an ensemble of $2M$ models, the first $M$ dynamic and the second $M$ static, and we build an SDOEBE matrix $\mQ$ with the following block structure:
\begin{equation*}
    \mQ = \begin{bmatrix} (1-\delta) \mI_M & \delta \mI_M \\ \delta \mI_M & (1-\delta) \mI_M \end{bmatrix}.
\end{equation*}
Because of the block structure of $\mQ$, we call this model an \emph{Ensemble of DOEBEs (E-DOEBEs)}. 
It can be interpreted as switching between dynamic and static models, where the probability of switching is given by $\delta$. 

The E-DOEBE unfortunately introduces the hyperparameter $\delta$, which needs to be selected, in addition to the values of $\sigma_{\text{rw}}^{(m)}$ for each $m$. These hyperparameters are not easily amendable to the empirical Bayes approach either, since their entire purpose is to improve performance on timescales larger than the pretraining period. 

In the case of $\delta$, we performed an experiment in \cref{app:exp_with_delta} where the value of $\delta$ is varied from $0.01$ to $0.2$; we found that the effects are relatively small, and a value of $\delta \simeq 0.05$ works well on all tested datasets. Meanwhile, for $\sigma_{\text{rw}}^2$, the block structure of the E-DOEBE can be repeated to ensemble several values of $\sigma_{\text{rw}}^2$. For example, to ensemble $R$ different models with $\sigma_{\text{rw}}^2 \in \{\sigma_{\text{rw}}^{(1)}, \dots,  \sigma_{\text{rw}}^{(R)}\}$, we would use the $RM \times RM$ matrix with $(1-R\delta) \mI_M$ on the block-diagonals and $\delta \mI_M$ elsewhere.

While the E-DOEBE is a special case of the SDOEBE, it is a novel formulation and consistently outperforms any given DOEBE model in our experiments.

\subsection{Why Additive Models May be Preferable} \label{sec:why_gams}
In the case of either RFF GPs or HSGPs, performance degrades rapidly for fixed feature dimension $F$ and increasing input dimension $D$. However, GPs with ``additive'' structure have been widely explored \citep{duvenaud2011additive}. The simplest case is when the GP is given the structure of a generalized additive model (GAM) \citep{hastie1990generalized}, known as a GAM-GP:
\begin{equation} \label{eq:add_approx}
    f(\rvx) \sim \mathcal{GP}\bigg(0, \sum_{d=1}^D \kappa_d(x_d, x_d')\bigg) \iff f(\rvx) = \sum_{d=1}^D f_d(x_d),
\end{equation}
where $f_d(x_d) \sim \mathcal{GP}\left(0, \kappa(x_d, x_d')\right)$ are GPs with respective kernel function $\kappa_d(\cdot, \cdot)$.

In this case, the number of basis functions in the HSGP approximation grows linearly with $D$, and likewise, so does the volume integrated over in the RFF approximation. This can be seen from the right-hand side of \cref{eq:add_approx}, as we can simply take $D$ independent approximations. Therefore, while the GAM-GP structure may be restrictive, we can obtain a better-quality approximation as the input dimension of the data $D$ grows for a fixed number of features $F$. This is useful because, in medium-to-high dimensions\footnote{In this context, by ``high-dimensional,'' we mean $D \gtrsim 20$. This is similar to its usage in, e.g., \citet{binois2022survey}.}, the approximation error stemming from high-dimensional quadrature may dominate the approximation error from assuming additive structure.

For large datasets, \citet{hensman2018variational} show how a variational version of RFF GPs with additive structure can be effective and efficient, and \citet{solin2020hilbert} show the same for HSGPs. \citet{delbridge2020randomly} explore GAM-GPs and a generalization using random projections and note their ``surprisingly'' good performance on large datasets and in high dimensions. 

To test the effectiveness of additive HSGPs compared to RFF GPs, we experimented with the ``large'' datasets used in \citet{delbridge2020randomly} and found that DOEBEs consisting of additive HSGPs may often outperform DOEBEs consisting of RFF GPs --- more details are available in \cref{app:high_dim_experiments}.

\section{Experiments} \label{sec:experiments}
We provide three different experiments in the main text, with additional experiments in the appendices. In the first experiment (\cref{sec:diff_basis}), we compare ensembles of several different basis expansions, showing that the best-performing model varies widely. In the second experiment (\cref{sec:need_basis}), we show how model collapse between static and dynamic models can occur, and how the model introduced in \cref{sec:switching_for_dyn} alleviates it. Finally, we show that E-DOEBE can effectively ensemble methods that are static and dynamic, and of different basis expansions (\cref{sec:exp3}).

The metrics we use are the normalized mean square error (nMSE), and the predictive log-likelihood (PLL). The nMSE is defined to be the MSE of $y_t$ with the predictive mean, divided by the variance of $y_{1:T}$. In particular, at time $t$, the nMSE is
\begin{equation*}
    \text{nMSE}_t = \frac{\sum_{\tau=1}^t (\mu_{y_\tau} - y_\tau)^2}{t \cdot \Var(y_{1:T})}
\end{equation*}
The predictive log-likelihood (PLL), which is the average value of $\log p(y_{t+1} | \rmX_{1:t}, \rvy_{1:t})$, i.e., $\text{PLL}_t = \sum_{\tau=1}^t \log p(y_{\tau+1} | \rmX_{1:\tau}, \rvy_{1:\tau}) / t$.

Across all experiments, we use several publicly available datasets, ranging in both size and the number of features. A table of dataset statistics is available in \cref{tab:dataset_stats}. Friedman \#1 and Friedman \#2 are synthetic datasets designed to be highly nonlinear, and notably, are i.i.d. The Elevators dataset is related to controlling the elevators on an aircraft. The SARCOS dataset uses simulations of a robotic arm, and Kuka \#1 is a similar real dataset from physical experiments. CaData is comprised of California housing data, and the task of CPU Small is to predict a type of CPU usage from system properties.

All hyperparameter optimization was done on the first \num{1000} samples of each dataset; since we already assume access, each dataset was additionally standardized in both $\rvx$ and $y$ using the statistics of the first \num{1000} samples. We follow \citet{lu2022incremental} in setting a weight to $0$ when it falls below the threshold of $10^{-16}$.

\begin{table}
    \centering
    \caption{Dataset statistics, including the number of samples, the number of features, and the original source. In addition to the original sources above, several of these datasets were curated by the UCI Machine Learning Repository \citep{uci} or LibSVM \citep{chang2011libsvm}.}
    \begin{tabular}{c|c|c}
    \hline
    Dataset Name & Number of Samples & Dimensionality $d$ \\\hline
    Friedman \#1 \citep{friedman1991multivariate} & \num{40000}  & \num{10} \\
    Friedman \#2 \citep{friedman1991multivariate} & \num{40000}  & \num{4}  \\
    Elevators    \citep{elevators}                & \num{16599}  & \num{17} \\
    SARCOS       \citep{rasmusses2005gpml}        & \num{44484}  & \num{21} \\
    Kuka \#1     \citep{meier2014incremental}     & \num{197920} & \num{21} \\
    CaData       \citep{pace1997sparse}           & \num{20640}  & \num{8}  \\
    CPU Small    \citep{delve}                    & \num{8192}   & \num{12} \\\hline
\end{tabular}
    \label{tab:dataset_stats}
\end{table}

\subsection{Comparing Different Basis Expansions} \label{sec:diff_basis}
To show that having a wide variety of basis expansion models available is useful, we test several model types on each dataset of \cref{tab:dataset_stats}. Additionally, we test static and dynamic versions of models to compare performance.

Models used for comparison include an additive HSGP model [{\bf\small{(D)OE-HSGP}}], an RFF GP [{\bf\small{(D)OE-RFF}}], an ensemble of quadratic, cubic, and quartic polynomials with additive structure [{\bf\small{(D)OE-Poly}}], linear regression [{\bf\small{(D)OE-Linear}}], and a one-layer RBF network [{\bf\small{(D)OE-RBF}}]. Apart from additional hyperparameter tuning in an ARD kernel, the {\small(D)OE-RFF} model is identical to the (D)IE-GP of \citet{lu2022incremental}.

For RFF GPs, \num{50} Fourier features were used (so $F = 2\times 50$), and for HSGPs, $\lfloor 100 / D \rfloor$ features were used for each dimension (so $F \lesssim 100$). In either case, an SE-ARD kernel was used. For RBF networks, \num{100} locations were initialized via K-means and then optimized with empirical Bayes, as well as ARD length scales. For all models except RBF networks, ensembles were created by the process proposed in \cref{sec:get_hypers} --- for RBF networks, the computation of the Hessian was too expensive, and so parameters were randomly perturbed by white Gaussian noise with variance $10^{-3}$ instead.

For dynamic models, $\sigma_{\text{rw}}^2$ was set to $10^{-3}$ following \citet{lu2022incremental}. The initial values of $\sigma_\rvtheta^2$ and $\sigma_\rvepsilon^2$ were $1.0$ and $0.25$, respectively. Optimization was performed with Adam \citep{kingma2014adam}.

\paragraph{Results} Results of the average nMSE and PLL can be found in \cref{fig:exp1}. We provide full plots of (online) nMSE and PLL as a function of samples seen in \cref{app:full_nmses}. We find that the best-performing class of models differs dramatically from dataset to dataset. In particular, in terms of both nMSE and PLL, HSGPs, RFF GPs, and RBF networks all attain the best performance on at least one dataset. This reinforces the idea that ensembling with respect to several different models is beneficial, as there is not a single method consistently outperforming the others.

Additionally, as expected, dynamic models can dramatically outperform static models in specific cases (for example, on SARCOS and Kuka \#1), but attain a worse PLL on datasets where the data is reasonably i.i.d. (for example, Friedman \#1).

\begin{figure}[th]
    \centering
    \includegraphics[width=0.9\textwidth]{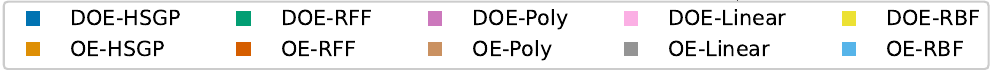} \\
    \includegraphics[width=0.9\textwidth]{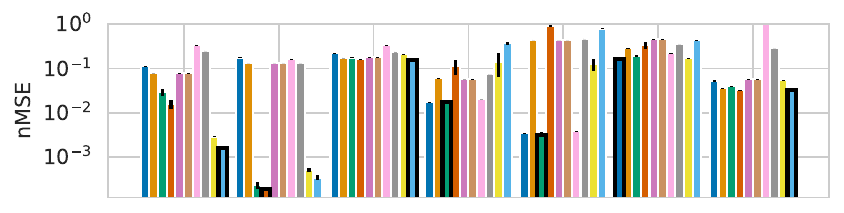} \\\vspace{-0.75em}
    \hspace{1.0em}\includegraphics[width=0.885\textwidth]{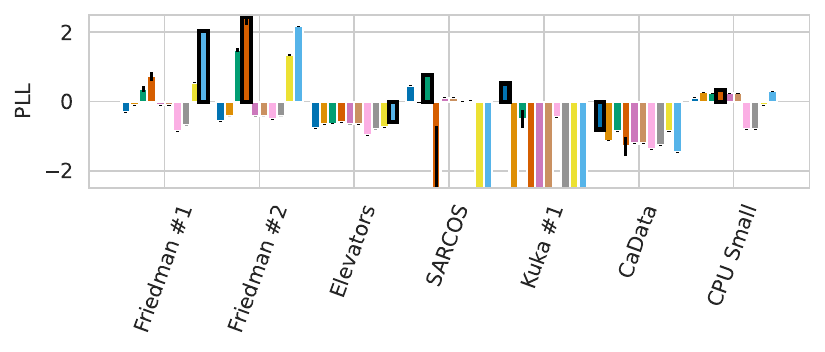} 
    \caption{Results of Experiment 1. Pictured are the nMSE (lower is better) and PLL (higher is better) with error bars denoting one standard deviation over \num{10} random trials. The best-performing method on each dataset and metric is highlighted with bold edges --- to preserve readability, the nMSE axis was bounded at $+1.0$ and the PLL axis is bound at $\pm 2.5$, even if points extend past this.}
    \label{fig:exp1}
\end{figure}

As expected, when an additive structure is a reasonable approximation, additive HSGP methods outperform RFF GPs, for example on Kuka \#1 and CaData. 
The RFF GP approximation rarely performs particularly poorly, making it a consistently ``good'' estimator, and achieves the highest PLL on Friedman \#2, SARCOS, and CPU Small. However, it is also at times outperformed by simpler methods, such as the RBF network, highlighting the potential benefits of using diverse basis expansions.

\paragraph{Key Takeaways} Key takeaways from this experiment include: (1) neither dynamic nor static methods are strictly superior across all settings, (2) no single basis expansion is superior across all datasets, and (3) RFF GPs consistently provide good performance, but this performance can often be improved upon by using other basis expansions.

\subsection{The Necessity of Ensembles of Dynamic Ensembles} \label{sec:need_basis}
In this experiment, we show that the E-DOEBE model proposed in \cref{sec:switching_for_dyn} can indeed prevent premature collapse of BMA weights. While this premature collapse of BMA weights does not seem typical in real datasets, it is not hard to illustrate its possibility, even on real datasets with high-performing methods.

As a constructive example, we can create an ensemble of additive HSGPs on the Kuka \#1 dataset, where dynamic models performed significantly better in \cref{sec:diff_basis}. In particular, we created an ensemble of two additive HSGPs where the first model is dynamic (in particular, ${\sigma_{\text{rw}}^{(1)}}^2 = 10^{-3}$) and the second model is static (i.e., ${\sigma_{\text{rw}}^{(2)}}^2 = 0$). The ensemble hyperparameters were fit with empirical Bayes, with the initial length scale values being the vector of ones. Then, the resulting ensemble was trained online as a DOEBE and as an E-DOEBE, with $\delta = 10^{-2}$. Note that in this carefully controlled setting, each basis expansion is entirely deterministic given the hyperparameters, so the results are purely deterministic and cannot be attributed to poor random seeds.

The resulting weights, pictured in \cref{fig:exp2}, demonstrate that premature collapse of BMA weights can be a problem. Numerically, the log-likelihood of the E-DOEBE model is dramatically better than that of the DOEBE model (\cref{tab:exp2}), showing this collapse can be catastrophic. 

This issue can be partially averted by eliminating the threshold of $10^{-16}$ when ensembling. Indeed, in this example, the weights reach a minimum of approximately $10^{-72}$. However, with any finite precision arithmetic, there is always the potential for this type of collapse to occur due to numerical underflow. It is trivial to construct such examples by generating the first $N_1$ samples with $\sigma_{\textrm{rw}}^{(m)} = 0$ until weight collapse occurs, and the rest of the dataset with $\sigma_{\textrm{rw}}^{(m)} > 0$.

\paragraph{Key Takeaway} The key takeaway of this experiment is that an ensemble of dynamic and static models can catastrophically collapse --- even when the discrepancy in performance along the entire dataset is large --- and that the E-DOEBE approach proposed in \cref{sec:switching_for_dyn} can avoid this collapse.

\begin{figure}[th]
    \centering
    \includegraphics[width=0.48\textwidth]{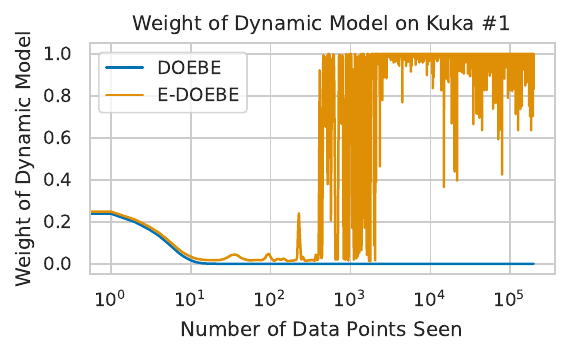}
     \caption{Results of Experiment 2. Pictured is the 10-sample moving average of the BMA weight corresponding the dynamic additive HSGP when trained as a DOEBE and E-DOEBE.} \label{fig:exp2}
\end{figure}

\begin{table}[th]
    \caption{Predictive log-likelihood of DOEBE and E-DOEBE models in Experiment 2 (higher is better).} \label{tab:exp2}
    \begin{tabular}{c|c} \hline
        Method & Predictive Log-Likelihood \\ \hline
        DOEBE & -403.41 \\
        E-DOEBE & {\bf 0.55} \\ \hline
    \end{tabular}
\end{table}

\subsection{E-DOEBE Outperforms Other Methods} \label{sec:exp3}
The ultimate goal of the E-DOEBE model is to combine static and dynamic models of several different types. To do so, we repeat the experiments of \cref{sec:diff_basis} while comparing to an E-DOEBE model. We restrict our attention to static and dynamic versions of the three best-performing families of models in Experiment 1 ({\small (D)OE-HSGP}, {\small (D)OE-RFF}, and {\small{(D)OE-RBF}}), and an E-DOEBE ensemble containing all of them. The E-DOEBE model is created with $\delta = 10^{-2}$, which was not tuned. Results are presented in \cref{fig:exp3}. Full (online) nMSE and PLL plots from each experiment can be found in \cref{app:full_nmses_exp_3}.

\begin{figure}[th]
    \centering
    \includegraphics[width=0.9\textwidth]{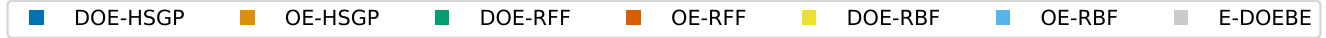} \\
    \includegraphics[width=0.9\textwidth]{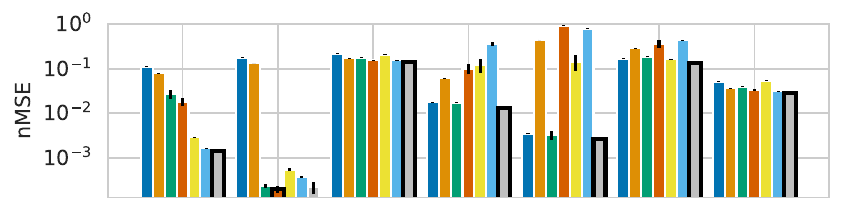} \\\vspace{-0.75em}
    \hspace{1.0em}\includegraphics[width=0.885\textwidth]{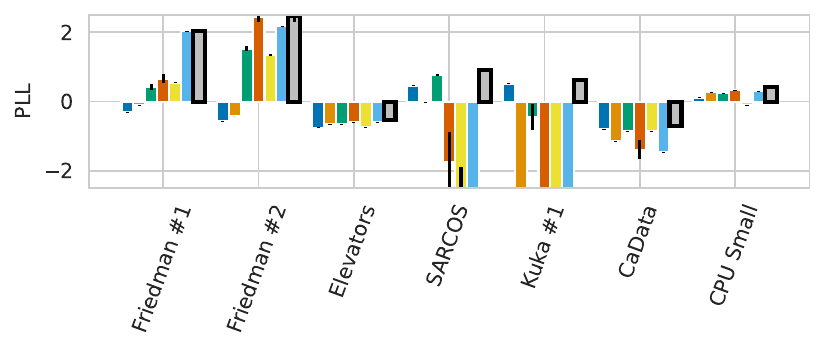} 
    \caption{Results of Experiment 3. Pictured are the nMSE (lower is better) and PLL (higher is better) with error bars denoting one standard deviation over \num{10} random trials. The best-performing method on each dataset and metric is highlighted with bold edges --- to preserve readability, the nMSE axis was bounded at $+1.0$ and the PLL axis is bound at $\pm 2.5$, even if points extend past this.}
    \label{fig:exp3}
\end{figure}

As desired, the E-DOEBE model can effectively ensemble dynamic and static models of different basis expansions. Across all experiments, the E-DOEBE model performs the best in terms of PLL, and is the best in terms of NMSE for all but one dataset (Friedman \#2). 

\paragraph{Key Takeaway} The E-DOEBE can effectively ensemble several different ensembles of high-performing basis expansions, resulting in consistently better performance than any single method.

\subsection{Additional Experiments}

We undertook several additional experiments which are omitted from the main text but included in appendices. The first such experiment is a simple streaming classification task using the Laplace approximation outlined in \cref{sec:nongaussian_likelihoods} (\cref{app:classification_exp}). We also perform an experiment validating that the sampling technique of \cref{sec:get_hypers} can be beneficial (\cref{app:sampling_vs_ML}), and a small ablation study on the $\delta$ parameter of the E-DOEBE (\cref{app:exp_with_delta}). We also compared the {\small (D)OE-RFF} and {\small (D)OE-HSGP} methods on additional datasets appearing in the GP literature (\cref{app:high_dim_experiments}). Finally, we provide some comments and experiments regarding a variant of {\small (D)OE-RFF} where the frequencies are optimized (\cref{app:optimizing_ff}).

\section{Conclusion \& Future Directions} \label{sec:conclusion}
In this paper, we showed that recent progress in online prediction using RFF GPs can be expanded to arbitrary linear basis expansions. This included several basis expansions that outperform RFF GPs on real and synthetic datasets. We show how, in a simple framework, different linear basis expansions can be ensembled together, increasing ensemble diversity. While several popular choices of basis expansions were used, it would be interesting to expand the tests even further, particularly with splines.

We also showed that premature collapsing of BMA weights can be an issue in online ensembling. We introduced the E-DOEBE model, which alleviates this issue, and showed its effectiveness. However, this meta-ensembling may be seen as adding a complex bandage to BMA rather than addressing the underlying issues. Further research may explore how to incorporate other Bayesian ensembling methods, such as Bayesian (hierarchical) stacking \citep{yao2018using,yao2022bayesian}.

While we provide guidance on how to initialize ensembles given a set of basis expansions, deciding which basis expansions to use is an important open topic. A na\"ive approach would be to expand on the existing use of the marginal likelihood for model selection, but this may be ``unsafe'' when using different basis expansions \citep{llorente2023marginal} and therefore requires care. We additionally provided several ideas for inference with non-Gaussian likelihoods, for example, for classification tasks. Determining which, if any, of these tasks is superior to the Laplace approximation is another interesting topic for future study. 

Finally, it could be beneficial to modify or add new basis expansions in the online setting. Indeed, recent progress in GPs has worked towards selecting and adapting kernels online to great benefit \citep{duvenaud2013structure,shin2023online}. If such techniques could be adapted to DOEBE, it could eliminate the pre-training period, and allow for adapting the domain of approximations when new data arrives.

\subsubsection*{Acknowledgments}
This work was supported by the National Science Foundation under Award 2212506.

\bibliography{main}
\bibliographystyle{tmlr}

\newpage
\appendix

\section{Classification Experiment} \label{app:classification_exp}
In the main body of the text, we focused on the empirical performance of (E-)DOEBE on Gaussian regression problems. While there are several possible improvements to classification (or other non-Gaussian) problems, as outlined in \cref{sec:nongaussian_likelihoods}, we perform a simple experiment on the common toy dataset ``banana'' \citep{ratsch2001soft}.\footnote{The original link to the dataset is no longer available, but a copy is available online at \citet{Diethe15}.} 

The banana dataset is notably not exchangeable, with data being loaded from left to right. Based on model knowledge and the experimental results in \cref{sec:diff_basis}, we would expect dynamic models to perform better on the non-exchangeable dataset, and static models to perform better on a shuffled (i.e., i.i.d.) version. We first experiment with the shuffled version of the dataset, and then with the unshuffled version. Models in each experiment are identical to those of \cref{sec:exp3}, except using logistic likelihood via the Laplace approximation. 

\subsection{Shuffled Data}
By shuffling the data, we expect it to be i.i.d., in which case we expect static models to perform well and the classification error to be steadily decreasing. We show the classification error as a function of the number of data seen and an example of the predictive distribution for the OE-RFF, both pictured in \cref{fig:banana_shuffled}

The data is obviously non-additive, so the HSGPs do not perform particularly well. The dataset is quite simple and, consistent with our regression experiments on simple i.i.d. datasets, the RBF networks perform quite well. Notably, the E-DOEBE consistently performs as well or better than the best-performing model. Finally, we note that the results with RFF models are comparable to those reported in \citet{lu2022incremental} for IE-GPs.

\begin{figure}[h]
    \centering
    
    \begin{subfigure}[b]{0.3\textwidth}
        \centering
        \includegraphics[width=\textwidth]{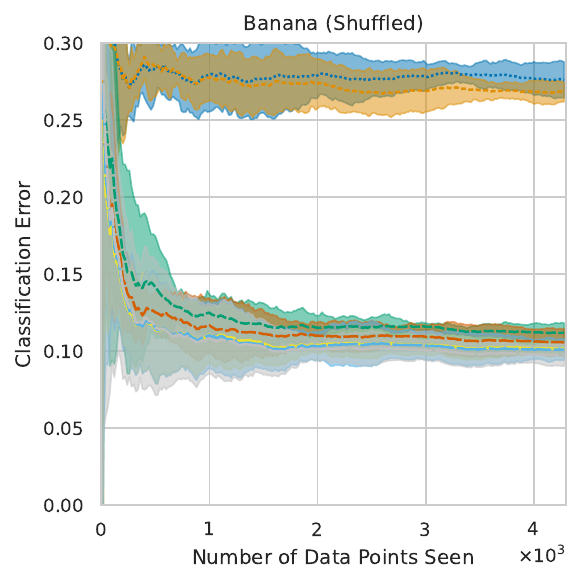}\\
        \raisebox{5em}{\hspace{1em}{\includegraphics[width=0.8\textwidth]{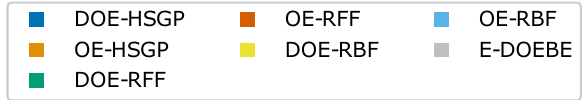}}}
        \vspace{-4.25em}
    \end{subfigure} 
    \begin{subfigure}[b]{0.3\textwidth}
        \centering
        \includegraphics[width=\textwidth]{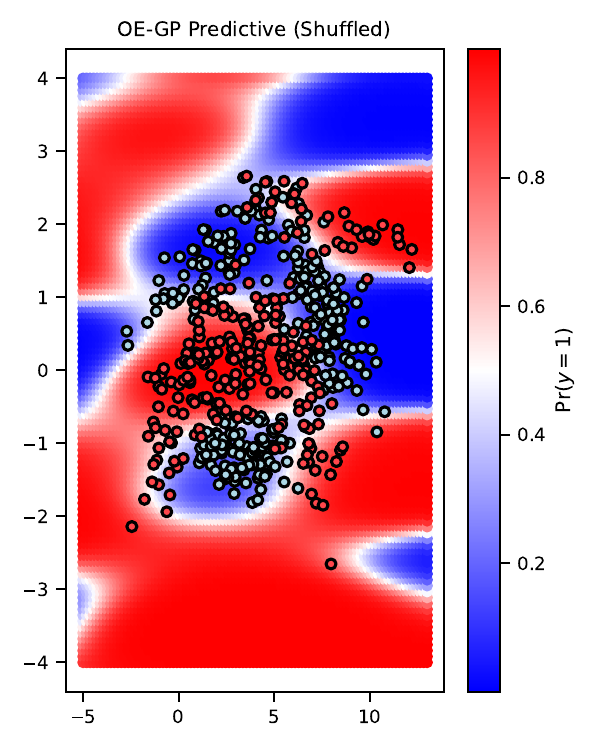}
    \end{subfigure}
    \caption{\textbf{(Left)} Classification errors for the shuffled banana dataset. Lines indicate the mean over \num{5} trials, with shaded regions denoting $\pm 1$ standard deviation. \textbf{(Right)} The predictive distribution of an OE-RFF ensemble after training on the shuffled banana dataset.} \label{fig:banana_shuffled}
\end{figure}

\subsection{Unshuffled Data}
We also experimented with the case where the data is not shuffled. In this case, the data is presented from left-to-right, as visualized in \cref{fig:banana_loading}. The resulting classification errors are pictured in \cref{fig:banana_nonshuffled} and are comparable to several other works performing streaming GP classification \citep{stanton2021kernel,bui2017streaming}, where classification errors are at or around $10\%$. 

Again consistent with our results for regression tasks, the dynamic models perform better in this non-exchangeable setting. Intuitively, the RBF models perform quite poorly as the locations $\vmu_k$ are only trained on the first \num{1000} points, which are not representative of the entire dataset. The RFF models still perform well, and interestingly, the DOE-HSGP model performs competitively as well. This may be because ``local'' additivity is a more realistic condition.

\begin{figure}[h]
    \centering
    \begin{subfigure}[b]{0.225\textwidth}
        \centering
        \includegraphics[width=\textwidth]{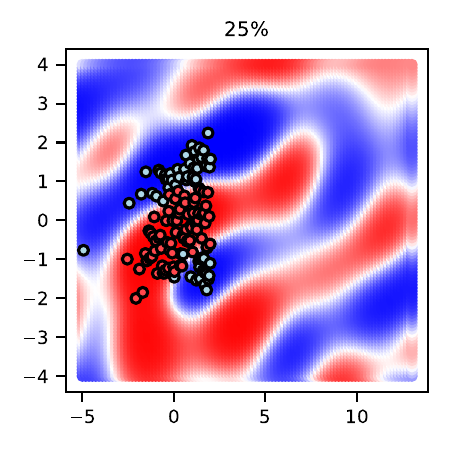}
    \end{subfigure} 
    \begin{subfigure}[b]{0.225\textwidth}
        \centering
        \includegraphics[width=\textwidth]{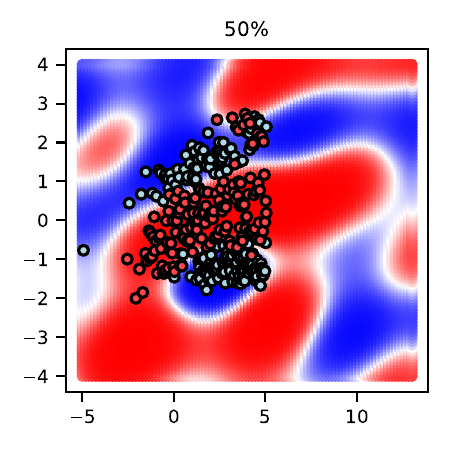}
    \end{subfigure}
    \begin{subfigure}[b]{0.225\textwidth}
        \centering
        \includegraphics[width=\textwidth]{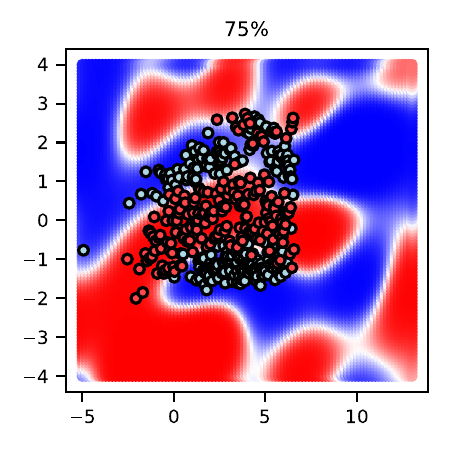}
    \end{subfigure} 
    \begin{subfigure}[b]{0.225\textwidth}
        \centering
        \includegraphics[width=\textwidth]{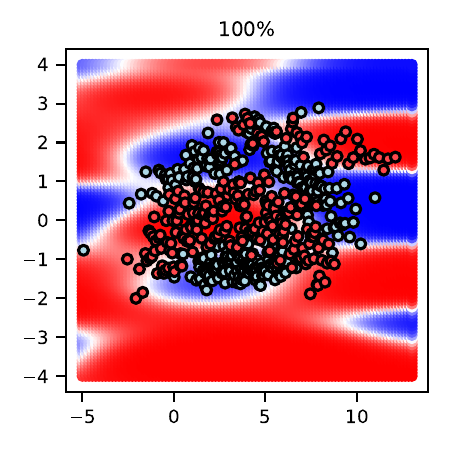}
    \end{subfigure}
    \caption{The predictive distribution of an OE-RFF ensemble after observing the first $25\%$, $50\%$, $75\%$, and $100\%$ of the data, respectively.} \label{fig:banana_loading}
\end{figure}

\begin{figure}[h]
    \centering
    \includegraphics[width=0.3\textwidth]{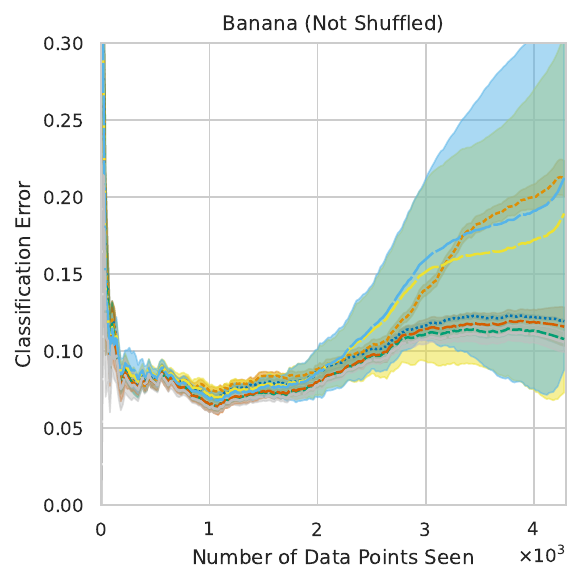}\\
    \raisebox{5em}{\hspace{1em}\includegraphics[width=0.24\textwidth]{figures/Appendix_Exp3_Legend.pdf}}
    \caption{Classification errors for the unshuffled banana dataset. Lines indicate the mean over \num{10} trials, with shaded regions denoting $\pm 1$ standard deviation.} \label{fig:banana_nonshuffled}
\end{figure}

\clearpage
\section{Proof of Theorem 1} \label{app:regret_analysis}
In this appendix, we provide the proof for the online regret analysis bound given in \cref{thm:online_bound}. The proof here is essentially the same as the proof in \citet[Section 9]{lu2022incremental} (and in turn, the proof of Theorem 2.2 in \citet{kakade2004online}). It is provided for notational consistency and completeness.

\begin{proof}[Proof of \cref{thm:online_bound}]
    The proof consists of three steps: (1) bounding the regret of the OEBE to any given expert, (2) bounding the regret of any expert with respect to a variational distribution, and (3) bounding the regret of the variational distribution with respect to a fixed strategy.

    \paragraph{Step 1} For step (1), note that by definition of the BMA weights, we have that
    \begin{equation*}
        \frac{w^{(m)}_{\tau-1}}{w^{(m)}_{\tau}} = \frac{\exp(-\ell_\tau)}{\exp(-l^{(m)}_\tau)},
    \end{equation*}
    Since the initial weights are $w^{(m)}_0 = 1 / M$, this implies 
    \begin{equation*}
        \exp\left(-\sum_{\tau=1}^T \ell_\tau + \sum_{\tau=1}^T l^{(m)}_\tau \right) = \frac{w_0^{(m)}}{\cancel{w_1^{(m)}}} \frac{\cancel{w_1^{(m)}}}{\cancel{w_2^{(m)}}} \cdots \frac{\cancel{w_{T-1}^{(m)}}}{w_T^{(m)}} = \frac{1}{Mw^{(m)}_T}.
    \end{equation*}
    Together, these bound the regret of the OEBE with respect to expert $m$,
    \begin{equation}
        \sum_{\tau = 1}^T \ell_{\tau} - \sum_{\tau=1}^T l^{(m)}_\tau = \log M + \log w^{(m)}_t \leq \log M, \label{eq:thm_1_proof_step_1}
     \end{equation}
     since $\log w^{(m)}_t \leq 0$ by construction.

     \paragraph{Step 2} For step (2), we introduce a variational distribution $q(\rvtheta^{(m)}) = \mathcal{N}(\rvtheta^{(m)} ; \rvtheta^{(m)}_*, \xi^2 \mI_{F_m})$ with variational parameter $\xi^2$. We then bound the regret of expert $m$ with respect to the expected loss under $q(\cdot)$,
     \begin{equation*}
         \sum_{\tau=1}^{T} l^{(m)}_\tau - \mathbb{E}_q[\mathcal{L}_{\rvtheta^{(m)}}],
     \end{equation*}
     where $\mathcal{L}_{\rvtheta^{(m)}} = p(y_{1:T} | {\rvtheta^{(m)}}, \rvx_{1:T})$. By Bayes's rule, we get that $\sum_{\tau=1}^T l^{(m)}_\tau = -\log p(y_{1:T} | \rvx_{1:T})$ (i.e., the marginal probability of $y_{1:t}$ given $\rvx_{1:t}$ under model $m$), and therefore 
     \begin{equation*}
         \sum_{\tau=1}^{T} l^{(m)}_\tau - \mathbb{E}_q[\mathcal{L}_{\rvtheta^{(m)}}] = \int q({\rvtheta^{(m)}}) \log \frac{p(y_{1:T} | {\rvtheta^{(m)}}, \rvx_{1:T})}{p(y_{1:T} | \rvx_{1:T})} \, d{\rvtheta^{(m)}}
     \end{equation*}
     since $\log p(y_{1:T} | \rvx_{1:T})$ does not depend on $\rvtheta$. By the definition of conditional densities, we rewrite
     \begin{equation*}
         \frac{p(y_{1:T} | {\rvtheta^{(m)}}, \rvx_{1:T})}{p(y_{1:T} | \rvx_{1:T})} = \frac{p(\rvtheta^{(m)} | y_{1:T}, \rvx_{1:T})}{p(\rvtheta^{(m)})}.
     \end{equation*}
     We will then multiply by $1 = q(\rvtheta^{(m)}) / q(\rvtheta^{(m)})$ and rearrange, to get
     \begin{equation*}
         \sum_{\tau=1}^{T} l^{(m)}_\tau - \mathbb{E}_q[\mathcal{L}_{\rvtheta^{(m)}}] = \int q({\rvtheta^{(m)}}) \frac{q(\rvtheta^{(m)})}{p(\rvtheta^{(m)})} \, d\rvtheta^{(m)} - \int q({\rvtheta^{(m)}}) \frac{q(\rvtheta^{(m)})}{p(\rvtheta^{(m)} | y_{1:T}, \rvx_{1:T})} \, d\rvtheta^{(m)}.
     \end{equation*}
     Recognizing each term as a Kullback Liebler (KL) divergence, this is clearly bounded by $\KL\left(q(\rvtheta^{(m)}) \parallel p(\rvtheta^{(m)})\right)$ by non-negativity of the KL divergence, resulting in the bound
     \begin{align}
         \sum_{\tau=1}^{T} l^{(m)}_\tau - \mathbb{E}_q[\mathcal{L}_{\rvtheta^{(m)}}] &\leq \KL\left(q(\rvtheta^{(m)}) \parallel p(\rvtheta^{(m)})\right) \notag \\
         &= F_m \log \sigma^{(m)}_{\rvtheta} + \frac{\lVert{\rvtheta^{(m)}_*}\rVert^2 + F_m \xi^2}{2 {\sigma_\rvtheta^{(m)}}^2} - \frac{F_m}{2} ( 1 - 2 \log \xi). \label{eq:thm_1_proof_step_2}
     \end{align}

     \paragraph{Step 3} Finally, for (3), we will bound the difference
     \begin{equation*}
         \mathbb{E}_q[\mathcal{L}_{\rvtheta^{(m)}}] - \mathcal{L}_{\rvtheta^{(m)}_*}.
     \end{equation*}
     By taking a Taylor series expansion of $\mathcal{L}(\cdot, y_\tau)$ centered at $z_{t,*} = \phi^{(m)}(\rvx_\tau)^\top \rvtheta^{(m)}_*$, we find
     \begin{equation*}
        \mathbb{E}_q[\mathcal{L}_{\rvtheta^{(m)}}] - \mathcal{L}_{\rvtheta^{(m)}_*} = \mathbb{E}_q \left[ \frac{d^2}{dz_\tau^2} \mathcal{L}(g(z_\tau); y_\tau) \frac{(z_\tau - z_{\tau, *})^2}2\right]
     \end{equation*}
     for $z_\tau = \phi^{(m)}(\rvx_\tau)^\top \rvtheta^{(m)}$ and some function $g(\cdot)$. By invoking our assumption about boundedness of the second derivative of $\mathcal{L}(\cdot; y_\tau)$, we see
     \begin{equation*}
          \mathbb{E}_q \left[ \frac{d^2}{dz_\tau^2} \mathcal{L}(g(z_\tau); y_\tau) \frac{(z_\tau - z_{\tau, *})^2}2\right] \leq  \mathbb{E}_q \left[ c \frac{(z_\tau - z_{\tau, *})^2}2\right],
     \end{equation*}
     and by invoking our assumption that $\lVert \phi^{(m)}(\rvx) \rVert^2 \leq 1$, we see
     \begin{equation} \label{eq:thm_1_proof_step_3}
         \mathbb{E}_q[\mathcal{L}_{\rvtheta^{(m)}}] - \mathcal{L}_{\rvtheta^{(m)}_*} = \mathbb{E}_q \left[ c \frac{(z_\tau - z_{\tau, *})^2}2\right] \leq \frac{c \xi^2}{2}.
     \end{equation}

     With each step complete, we must now combine them to finish our proof. By combining \cref{eq:thm_1_proof_step_1,eq:thm_1_proof_step_2,eq:thm_1_proof_step_3}, we find
     \begin{equation}
         \sum_{\tau=1}^T \ell_t - \sum_{\tau=1}^T \mathcal{L}(\vphi^{(m)}(\rvx_\tau)^\top \rvtheta_*^{(m)}; y_\tau) \leq \frac{Tc\xi^2}{2} + F_m \log \sigma_{\rvtheta} + \frac{\lVert \rvtheta^{(m)}_* \rVert^2 + n \xi^2}{2{\sigma_\rvtheta^{(m)}}^2} + \frac{F_m}{2}(1 - 2 \log \xi).
     \end{equation}

     After minimizing with respect to the variational parameter $\xi^2$ and simplifying, the theorem is proven.
\end{proof}

\clearpage
\section{Experiments with Sampling Hyperparameters for Ensembles} \label{app:sampling_vs_ML}
To show the benefits of using a Laplace approximation to sample hyperparameters, we repeat the experiments of \cref{sec:diff_basis} on the Elevators, SARCOS, and CaData datasets using just DOE-HSGP and DOE-RFF models, creating two copies: one using the type-II MLE hyperparameters, and the other using the Laplace approximation of the marginal likelihood. The experiment consisted of $100$ trials where $10$ samples were drawn from each Laplace approximation. For RFF GPs, the frequencies of the Monte Carlo approximation were fixed to be the same for each trial. The method with better performance was determined via the one-sided Wilcoxon signed-rank test in the relevant direction. Results can be found in \cref{tab:expa1}.

\begin{table}[ht]
\centering
\caption{Predictive likelihood (higher is better) and normalized MSE (lower is better) of type-II MLE and Laplace-approximated initialization, plus/minus one standard deviation over $100$ trials. Bolded entries denote superior performance significant at the $p=0.05$ level according to a one-sided Wilcoxon rank-sum test.}
\label{tab:expa1}
\scriptsize
\begin{tabular}{c|ccc|ccc}
\hline
\multirow{2}{*}{Method} & \multicolumn{3}{c|}{\textbf{Predictive Log-Likelihood}} & \multicolumn{3}{c}{\textbf{Normalized Mean Square Error}} \\
                & Elevators       & SARCOS          & CaData & Elevators & SARCOS & CaData \\ \hline
DOE-HSGP-MLE    & $-0.753 \pm 0.000$ & $0.421 \pm 0.000$ & $0.081 \pm 0.000$ 
&  $0.221 \pm 0.000$  & $\mathbf{0.017 \pm 0.000}$ & $0.055 \pm 0.000$ \\
DOE-HSGP-Sample & $\mathbf{-0.748\pm0.003}$  & {$\mathbf{0.466 \pm 0.010}$} & $\mathbf{0.120 \pm 0.010}$  
& $\mathbf{0.219 \pm 0.001}$ &  $0.018 \pm 0.000$ & $\mathbf{0.052 \pm 0.001}$\\ \hline
DOE-RFF-MLE     &  $-0.640 \pm 0.007$ & $0.756 \pm 0.018$ & $0.243 \pm 0.009$ 
& $0.178 \pm 0.003$ & $0.018 \pm 0.001$ &  $0.040 \pm 0.001$\\
DOE-RFF-Sample  & $\mathbf{-0.639 \pm 0.007}$ & $\mathbf{0.766 \pm 0.019}$ & $\mathbf{0.247 \pm 0.009}$ 
& $\mathbf{0.177 \pm 0.004}$ & $0.018 \pm 0.001$ & $\mathbf{0.040 \pm 0.002}$ \\
\hline
\end{tabular}
\end{table}

Overall, the benefit of sampling from the posterior on these datasets appears to be small but significant. In particular, in terms of predictive log-likelihood (which DOEBE targets), the Laplace approximation initialization never performs worse.

\clearpage
\section{Sensitivity of E-DOEBE With Respect to \texorpdfstring{$\delta$}{the Switching Probability}} \label{app:exp_with_delta}
In this section, we test the performance of the E-DOEBE model used in \cref{sec:exp3} across different values of $\delta \in [0.01, 0.05, 0.1, 0.2]$. We ran the E-DOEBE model on each dataset once, and recorded the PLL. Results can be found in \cref{fig:exp_with_delta}. We conclude that a value of $\delta \simeq 0.05$ typically performs well, but performance is reasonably similar for all $0.01 \leq \delta \leq 0.2$.

\begin{figure}[th]
    \centering
    \includegraphics[width=0.6\textwidth]{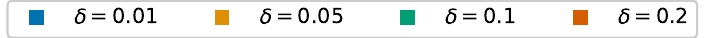} \\
    \includegraphics[width=0.9\textwidth]{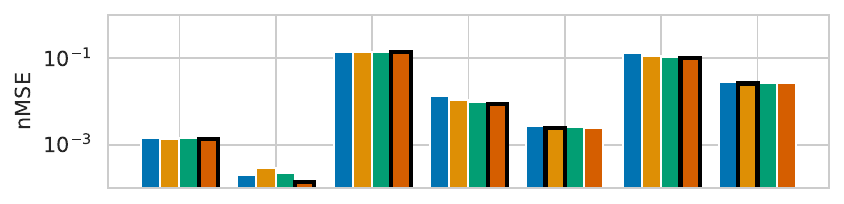} \\\vspace{-0.75em}
    \hspace{1.0em}\includegraphics[width=0.885\textwidth]{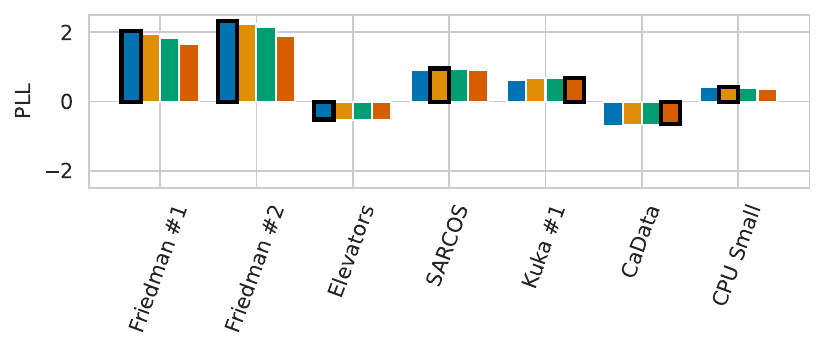} 
    \caption{Results of the $\delta$ ablation experiment. Pictured are the nMSE (lower is better) and PLL (higher is better) over one trial. The best-performing method on each dataset and metric is highlighted with bold edges.}
    \label{fig:exp_with_delta}
\end{figure}

\clearpage
\section{Further Experiments on Large Datasets} \label{app:high_dim_experiments}
To support our assertion that additive HSGPs may be superior to RFF GPs in medium-high dimension, we perform experiments on the UCI datasets \citep{uci} used in \citet{delbridge2020randomly}. A summary of each dataset used can be found in \cref{tab:high_dim_datasets}.

\begin{table}[h]
    \centering
    \caption{Dataset statistics, including the number of samples and the number of features for datasets used in \citet{delbridge2020randomly}. All datasets are available on the UCI Machine Learning Repository \citep{uci}.}
    \begin{tabular}{c|c|c}
    \hline
    Dataset Name & Number of Samples & Dimensionality $d$ \\\hline
    autos & \num{159}  & \num{25} \\
    servo & \num{167} & \num{4} \\
    machine & \num{209} & \num{7} \\
    yacht & \num{308} & \num{6} \\
    autompg & \num{392} & \num{7} \\
    housing & \num{506} & \num{13} \\
    stock & \num{536} & \num{11} \\
    energy & \num{768} & \num{8} \\
    concrete & \num{1030} & \num{8} \\
    airfoil & \num{1503} & \num{5} \\
    gas & \num{2565} & \num{128} \\
    skillcraft & \num{3338} & \num{19} \\
    sml & \num{4137} & \num{26} \\
    pol & \num{15000} & \num{26} \\
    bike & \num{17379} & \num{17} \\
    kin40k & \num{40000} & \num{8} \\\hline
\end{tabular}
    \label{tab:high_dim_datasets}
\end{table}

In particular, we use the {\small(D)OE-HSGP} and {\small(D)OE-RFF} models of \cref{sec:diff_basis}, with the following modifications: the number of Fourier features was set to $250$ (so $F = 2 \times 250$), and $\lfloor 500 / D \rfloor$ features were used for each dimension (so $F \lesssim 500$). Because the datasets are often much shorter, empirical Bayes estimates are taken over the first $100$ points only. Dimensions of the input vectors which were identically zero for all samples were removed. Results can be found in \cref{fig:high_dim_exp_summary}.

Overall, we find that {\small(D)OE-HSGPs} outperform {\small{(D)OE-RFFs}} on several tasks, with varying input dimension $D$ and length $N$. This includes several datasets which would typically be considered high-dimensional in the GP literature, for example, the gas dataset ($D=128$) and the sml dataset ($D=26$).

\begin{figure}[h]
    \centering
    \includegraphics[width=0.6\textwidth]{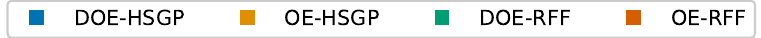} \\
    \includegraphics[width=0.9\textwidth]{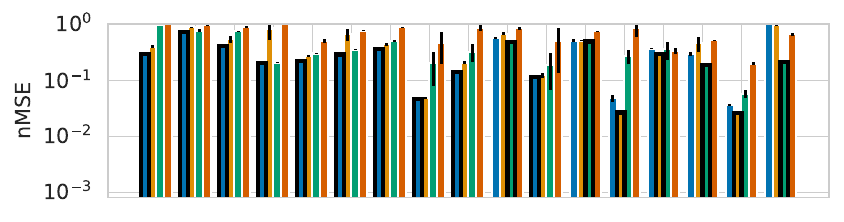} \\\vspace{-0.75em}
    \hspace{1.0em}\includegraphics[width=0.885\textwidth]{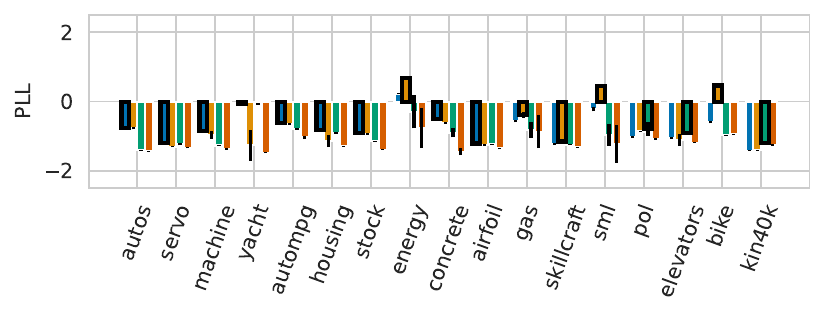} 
    \caption{Results of additional experiments on UCI datasets. Pictured are the nMSE (lower is better) and PLL (higher is better) with error bars denoting one standard deviation over \num{10} random trials. The best-performing method on each dataset and metric is highlighted with bold edges --- to preserve readability, the nMSE axis was bounded at $+1.0$ and the PLL axis is bound at $\pm 2.5$, even if points extend past this.}
    \label{fig:high_dim_exp_summary}
\end{figure}

\clearpage
\section{Results by Dataset for Experiment 1} \label{app:full_nmses}
Here, we provide the full nMSE and PLL plots from the experiment in \cref{sec:diff_basis}.

\begin{figure}[th]
    \centering
    \begin{subfigure}[b]{0.3\textwidth}
        \centering
        \includegraphics[width=\textwidth]{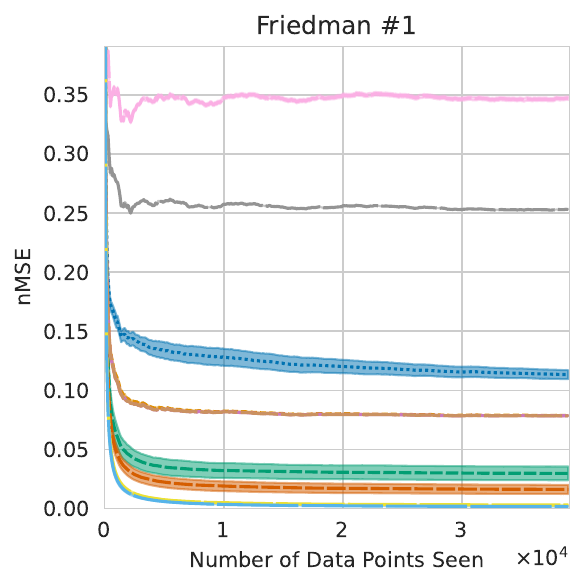}
    \end{subfigure}
    \begin{subfigure}[b]{0.3\textwidth}
        \centering
        \includegraphics[width=\textwidth]{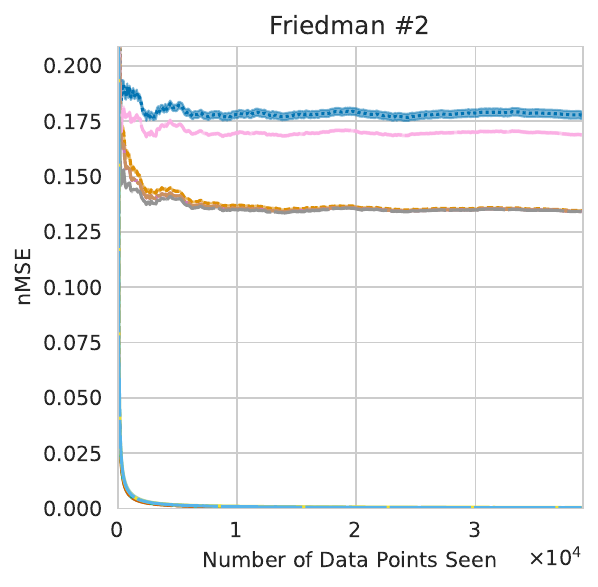}
    \end{subfigure}
    \begin{subfigure}[b]{0.3\textwidth}
        \centering
        \includegraphics[width=\textwidth]{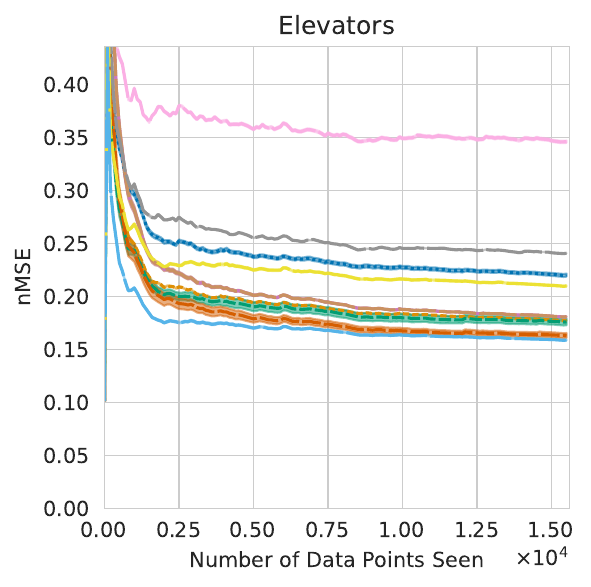}
    \end{subfigure} 
    \begin{subfigure}[b]{0.3\textwidth}
        \centering
        \includegraphics[width=\textwidth]{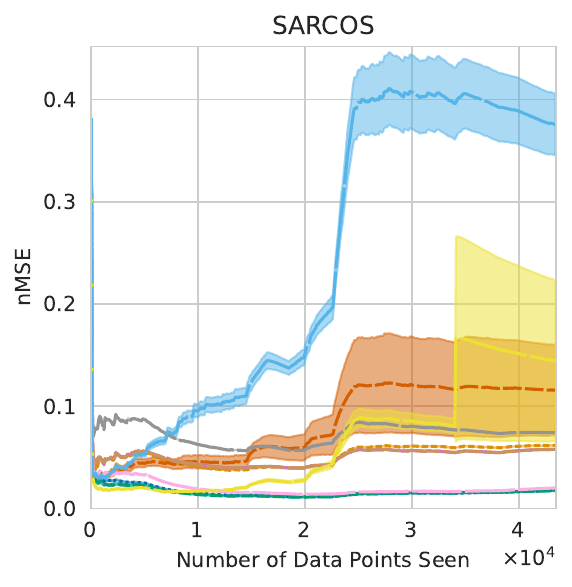}
    \end{subfigure}
    \begin{subfigure}[b]{0.3\textwidth}
        \centering
        \includegraphics[width=\textwidth]{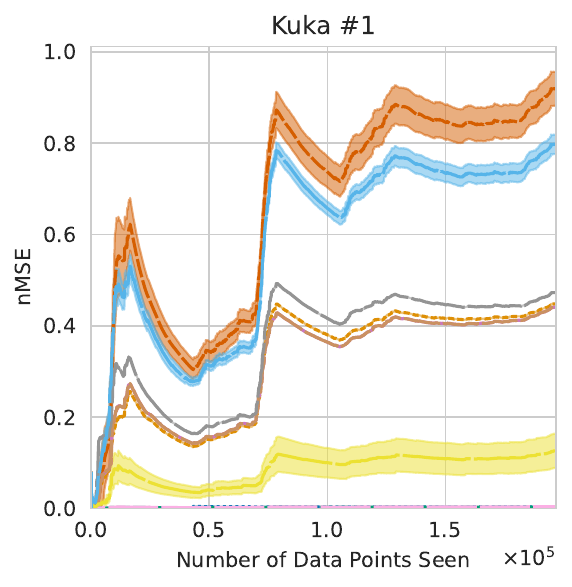}
    \end{subfigure}
    \begin{subfigure}[b]{0.3\textwidth}
        \centering
        \includegraphics[width=\textwidth]{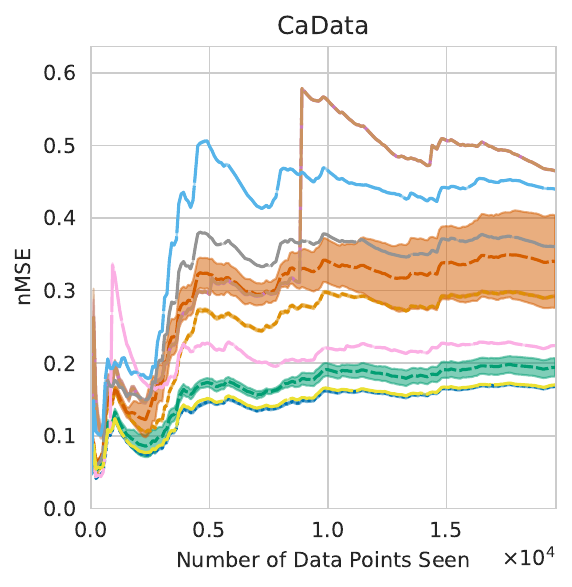}
    \end{subfigure} \\
    \begin{subfigure}[b]{0.3\textwidth}
        \centering
        \includegraphics[width=\textwidth]{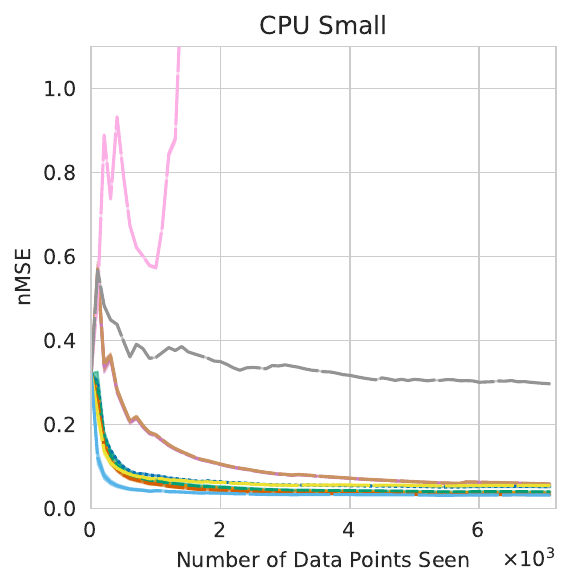}
    \end{subfigure}
    \begin{subfigure}[b]{0.6\textwidth}
        \centering
        \raisebox{5em}{\includegraphics[width=0.8\textwidth]{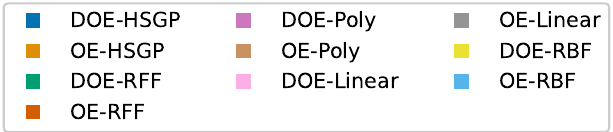}}
    \end{subfigure}
    \caption{nMSE plots of each method tested in the experiment of \cref{sec:diff_basis}. Lines indicate the mean over \num{10} trials, with shaded regions denoting $\pm 1$ standard deviation.}
    \label{fig:all_nmse_plots_exp_1}
\end{figure}

\begin{figure}[h]
    \centering
    \begin{subfigure}[b]{0.3\textwidth}
        \centering
        \includegraphics[width=\textwidth]{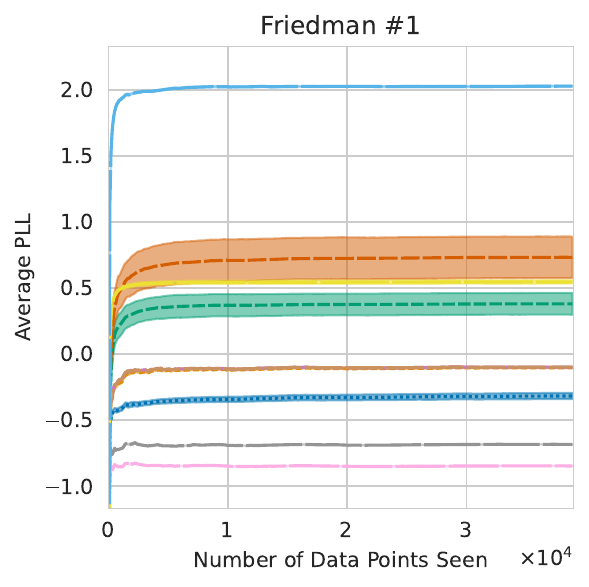}
    \end{subfigure}
    \begin{subfigure}[b]{0.3\textwidth}
        \centering
        \includegraphics[width=\textwidth]{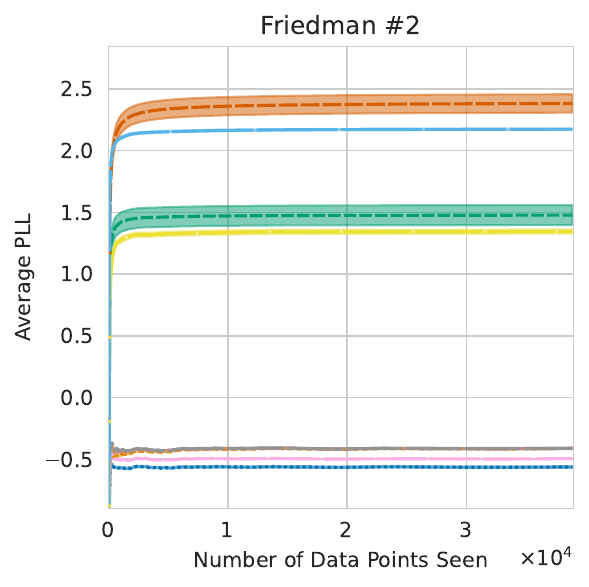}
    \end{subfigure}
    \begin{subfigure}[b]{0.3\textwidth}
        \centering
        \includegraphics[width=\textwidth]{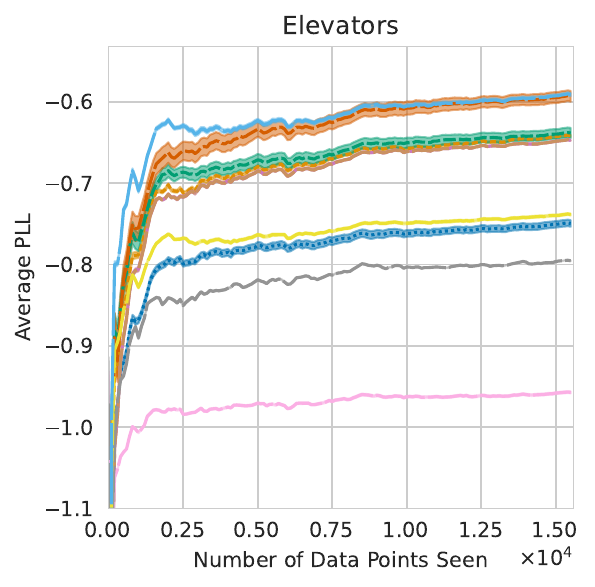}
    \end{subfigure} 
    \begin{subfigure}[b]{0.3\textwidth}
        \centering
        \includegraphics[width=\textwidth]{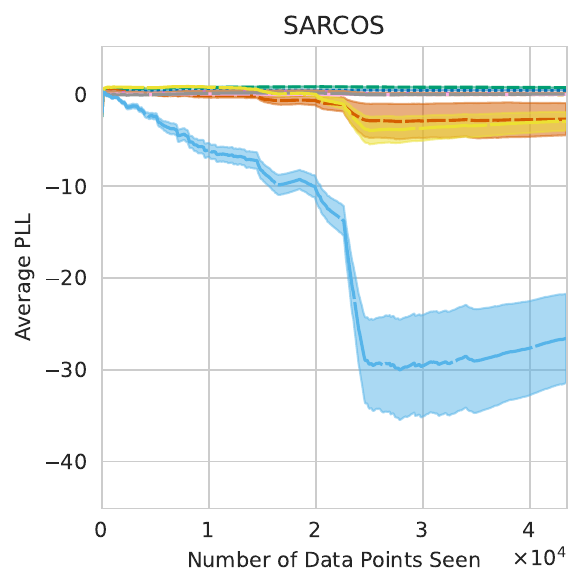}
    \end{subfigure}
    \begin{subfigure}[b]{0.3\textwidth}
        \centering
        \includegraphics[width=\textwidth]{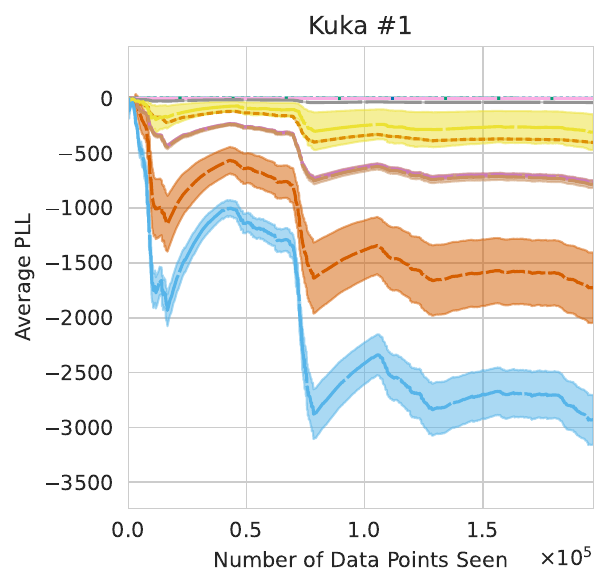}
    \end{subfigure}
    \begin{subfigure}[b]{0.3\textwidth}
        \centering
        \includegraphics[width=\textwidth]{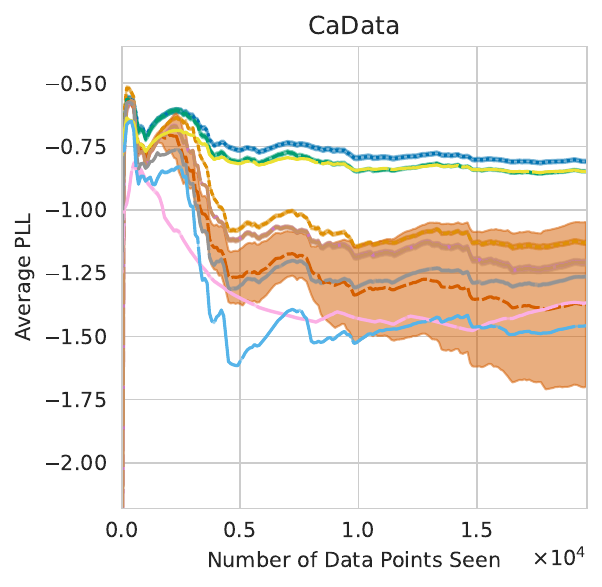}
    \end{subfigure} \\
    \begin{subfigure}[b]{0.3\textwidth}
        \centering
        \includegraphics[width=\textwidth]{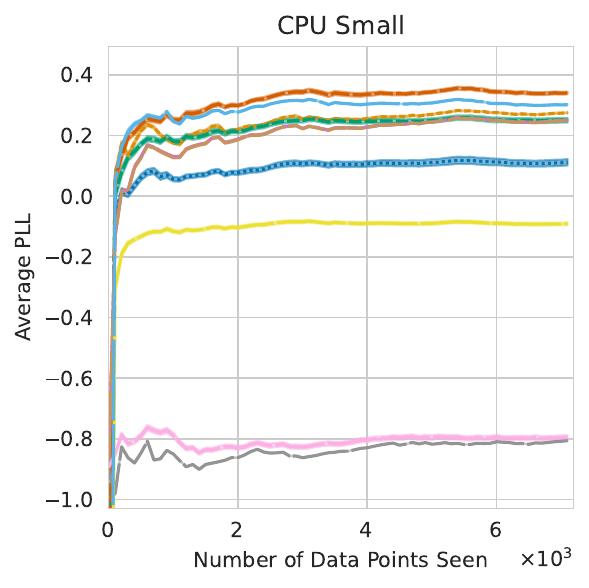}
    \end{subfigure}
    \begin{subfigure}[b]{0.6\textwidth}
        \centering
        \raisebox{5em}{\includegraphics[width=0.8\textwidth]{figures/Appendix_Legend.pdf}}
    \end{subfigure}
    \caption{PLL plots of each method tested in the experiment of \cref{sec:diff_basis}. Lines indicate the mean over \num{10} trials, with shaded regions denoting $\pm 1$ standard deviation.}
    \label{fig:all_pll_plots_exp_1}
\end{figure}

\clearpage
\section{Results by Dataset for Experiment 3} \label{app:full_nmses_exp_3}
Here, we provide the full nMSE plots from the experiment in \cref{sec:exp3}.

\begin{figure}[h]
    \centering
    \begin{subfigure}[b]{0.3\textwidth}
        \centering
        \includegraphics[width=\textwidth]{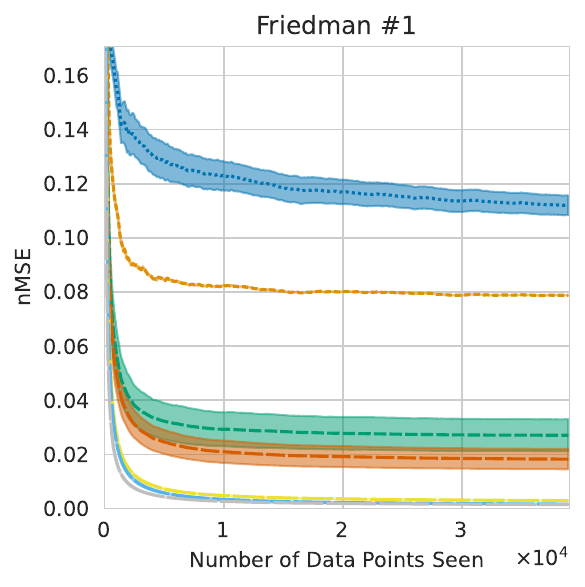}
    \end{subfigure}
    \begin{subfigure}[b]{0.3\textwidth}
        \centering
        \includegraphics[width=\textwidth]{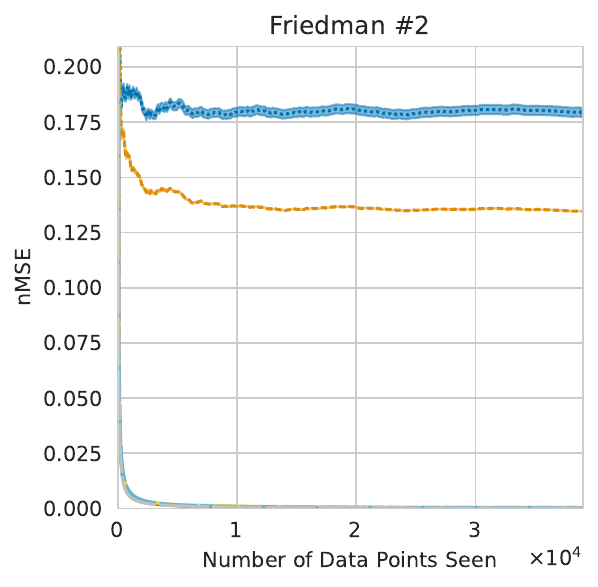}
    \end{subfigure}
    \begin{subfigure}[b]{0.3\textwidth}
        \centering
        \includegraphics[width=\textwidth]{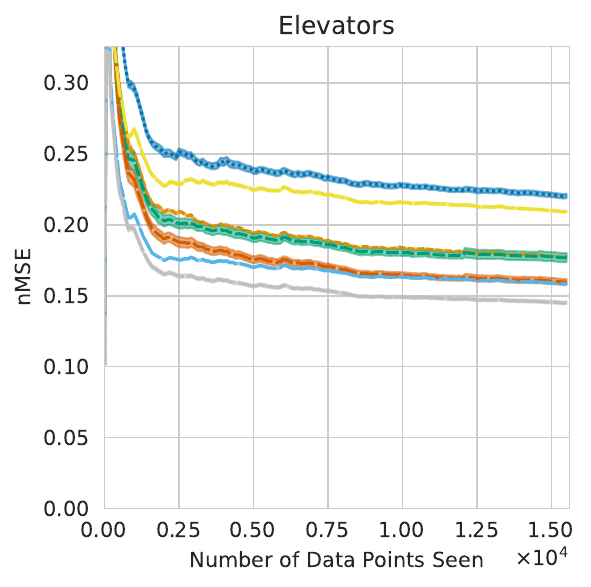}
    \end{subfigure} 
    \begin{subfigure}[b]{0.3\textwidth}
        \centering
        \includegraphics[width=\textwidth]{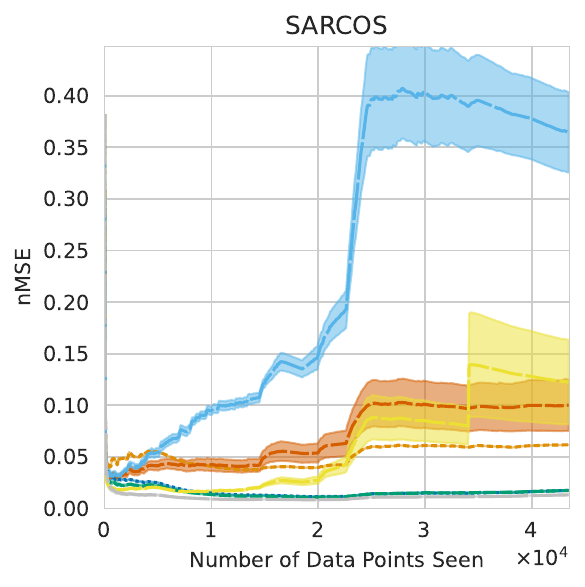}
    \end{subfigure}
    \begin{subfigure}[b]{0.3\textwidth}
        \centering
        \includegraphics[width=\textwidth]{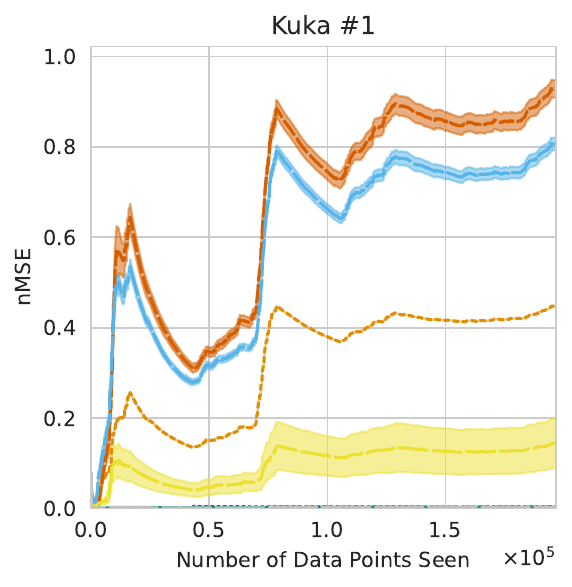}
    \end{subfigure}
    \begin{subfigure}[b]{0.3\textwidth}
        \centering
        \includegraphics[width=\textwidth]{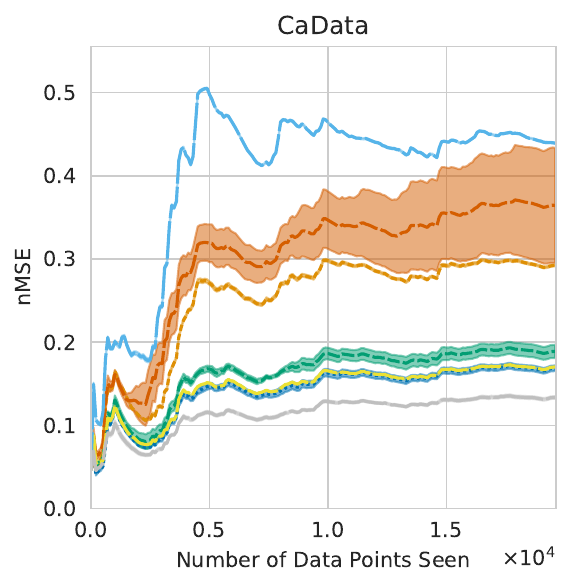}
    \end{subfigure} \\
    \begin{subfigure}[b]{0.3\textwidth}
        \centering
        \includegraphics[width=\textwidth]{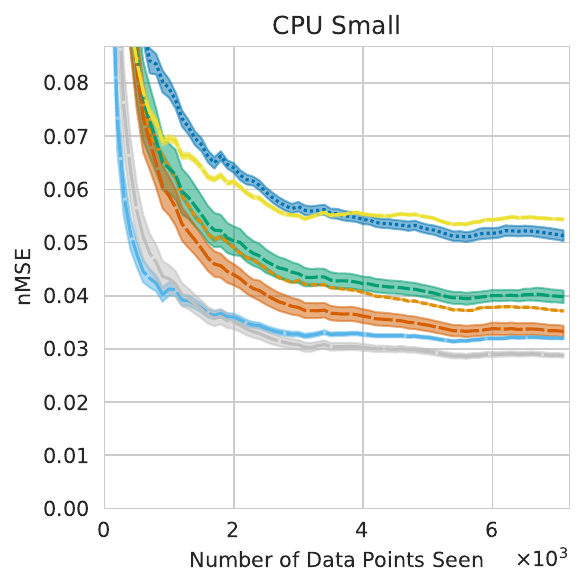}
    \end{subfigure}
    \begin{subfigure}[b]{0.6\textwidth}
        \centering
        \raisebox{5em}{\includegraphics[width=0.8\textwidth]{figures/Appendix_Exp3_Legend.pdf}}
    \end{subfigure}
    \caption{nMSE plots of each method tested in the experiment of \cref{sec:exp3}. Lines indicate the mean over \num{10} trials, with shaded regions denoting $\pm 1$ standard deviation.}
    \label{fig:all_nmse_plots_exp_3}
\end{figure}

\begin{figure}[h]
    \centering
    \begin{subfigure}[b]{0.3\textwidth}
        \centering
        \includegraphics[width=\textwidth]{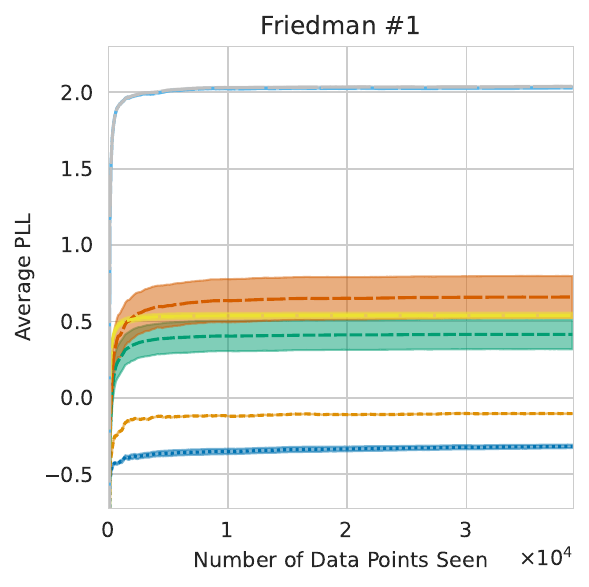}
    \end{subfigure}
    \begin{subfigure}[b]{0.3\textwidth}
        \centering
        \includegraphics[width=\textwidth]{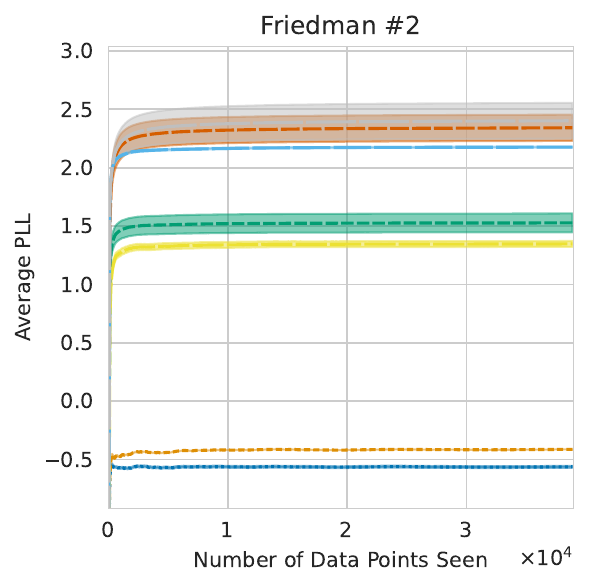}
    \end{subfigure}
    \begin{subfigure}[b]{0.3\textwidth}
        \centering
        \includegraphics[width=\textwidth]{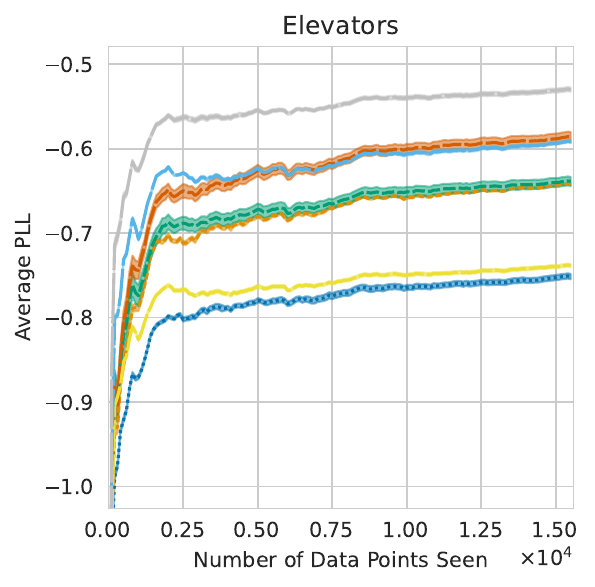}
    \end{subfigure} 
    \begin{subfigure}[b]{0.3\textwidth}
        \centering
        \includegraphics[width=\textwidth]{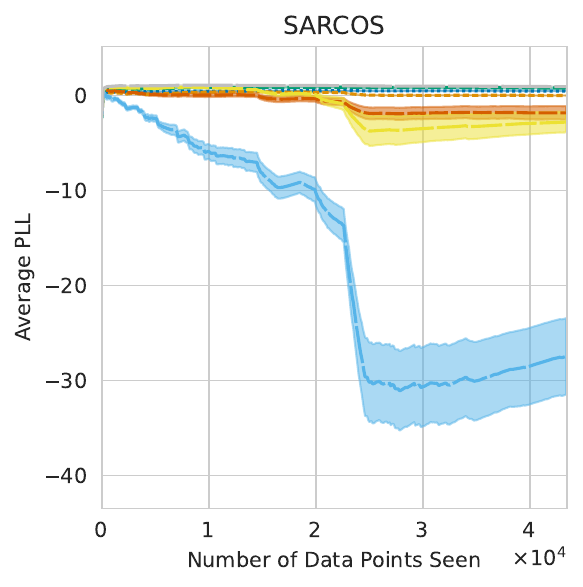}
    \end{subfigure}
    \begin{subfigure}[b]{0.3\textwidth}
        \centering
        \includegraphics[width=\textwidth]{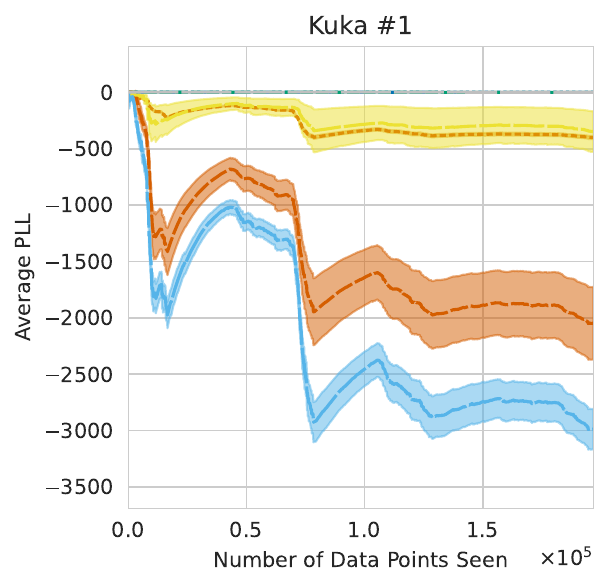}
    \end{subfigure}
    \begin{subfigure}[b]{0.3\textwidth}
        \centering
        \includegraphics[width=\textwidth]{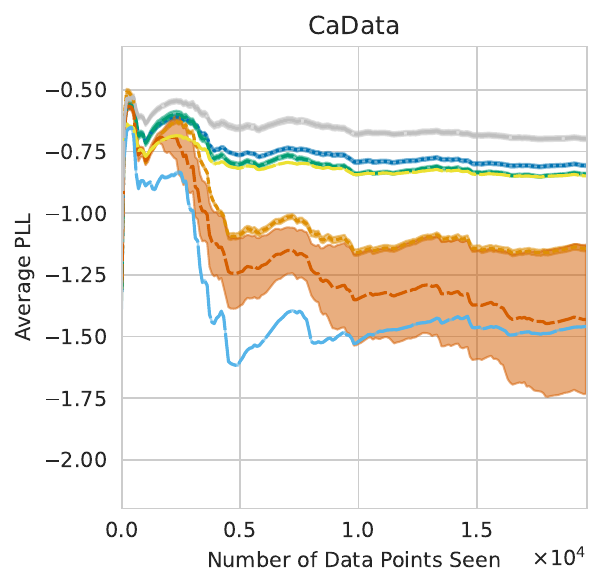}
    \end{subfigure} \\
    \begin{subfigure}[b]{0.3\textwidth}
        \centering
        \includegraphics[width=\textwidth]{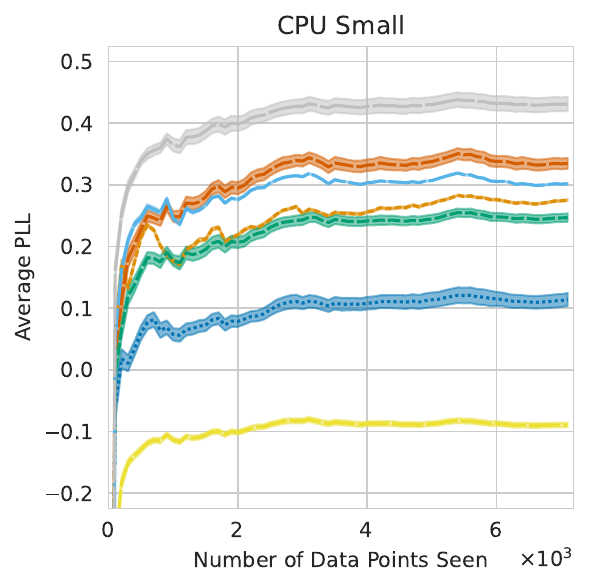}
    \end{subfigure}
    \begin{subfigure}[b]{0.6\textwidth}
        \centering
        \raisebox{5em}{\includegraphics[width=0.8\textwidth]{figures/Appendix_Exp3_Legend.pdf}}
    \end{subfigure}
    \caption{PLL plots of each method tested in the experiment of \cref{sec:exp3}. Lines indicate the mean over \num{10} trials, with shaded regions denoting $\pm 1$ standard deviation.}
    \label{fig:all_pll_plots_exp_3}
\end{figure}

\clearpage
\section{Optimizing Fourier Features} \label{app:optimizing_ff}
Our framework and implementation make it simple to additionally optimize the Fourier features $\vs_m$ of the RFF model, which we will refer to as the \emph{optimized Fourier features (OFF)} model. Such an optimization is not novel and was indeed explored in the original paper introducing RFF for GPs \citep{lazaro2010sparse}. The idea of optimizing spectral points also forms the basis of the popular spectral mixture kernel \citep{wilson2013gaussian}, which uses a Gaussian mixture in the spectral domain instead of a particle representation. 

In some sense, optimizing frequencies defeats the main motivation of RFF GPs; we can no longer claim that the results converge to any particular GP as $m \to \infty$, since we are no longer approximating \cref{eq:ssgp_approx_full} for a pre-specified kernel. Indeed, one can view the resulting model as a one-hidden-layer neural network, with activation function $\phi_{\text{RFF}}(\cdot)$; likewise, we expect the results to inherit both benefits of neural networks (e.g. flexibility) and drawbacks (e.g. harsh overfitting).

We additionally note that the resulting model bears a significant resemblance to the Gaussian RBF basis expansion model. As will become evident in our results, there is likewise much similarity in their performance: on ``simple'' datasets, optimizing frequencies provides large benefits. On more complex --- or dynamic --- datasets, it is instead prone to overfitting.

For experiments, we repeat \cref{sec:diff_basis} using the {\small DOE-RFF} model, and a version with optimized frequencies ({\small DOE-OFF}). Hyperparameters for both were sampled with the ``Gaussian'' strategy, as the number of hyperparameters is too large in the optimized version for the Laplace approximation. Results can be found in \cref{fig:exp4}.

\begin{figure}[h]
    \centering
    \includegraphics[width=0.35\textwidth]{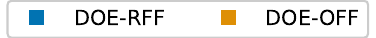} \\
    \begin{subfigure}[b]{0.3\textwidth}
        \centering
        \includegraphics[width=\textwidth]{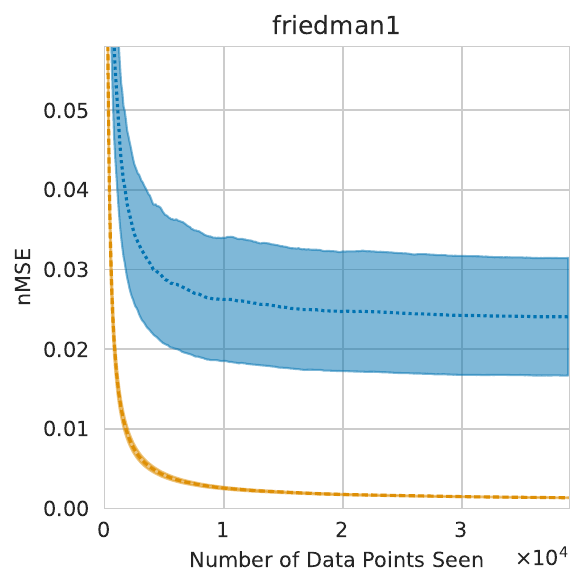}
    \end{subfigure}
    \begin{subfigure}[b]{0.3\textwidth}
        \centering
        \includegraphics[width=\textwidth]{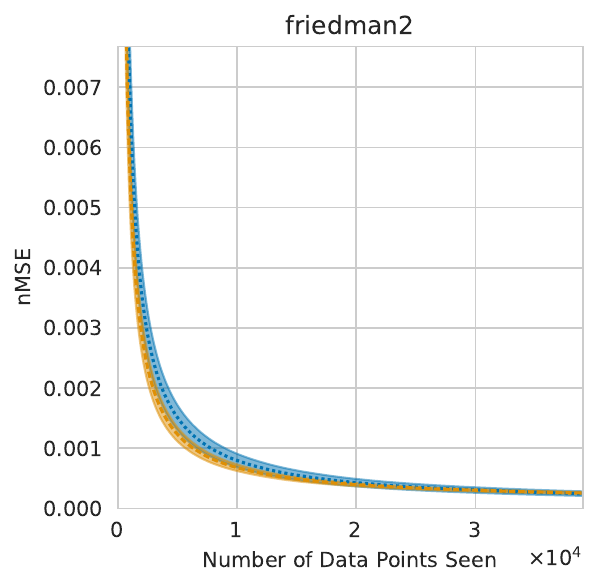}
    \end{subfigure}
    \begin{subfigure}[b]{0.3\textwidth}
        \centering
        \includegraphics[width=\textwidth]{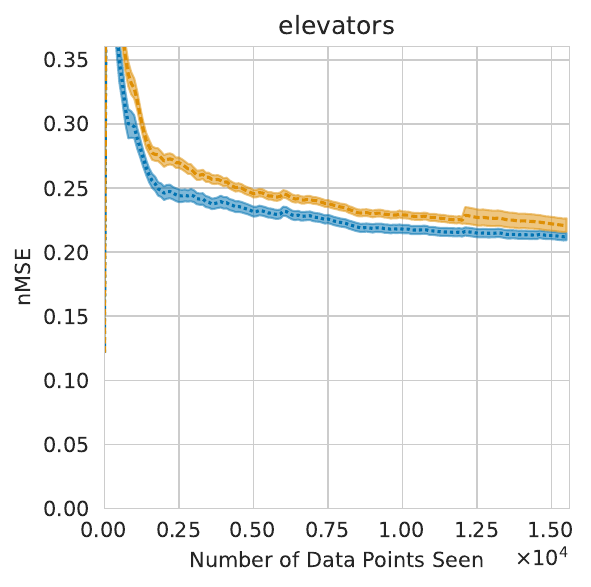}
    \end{subfigure} 
    \begin{subfigure}[b]{0.3\textwidth}
        \centering
        \includegraphics[width=\textwidth]{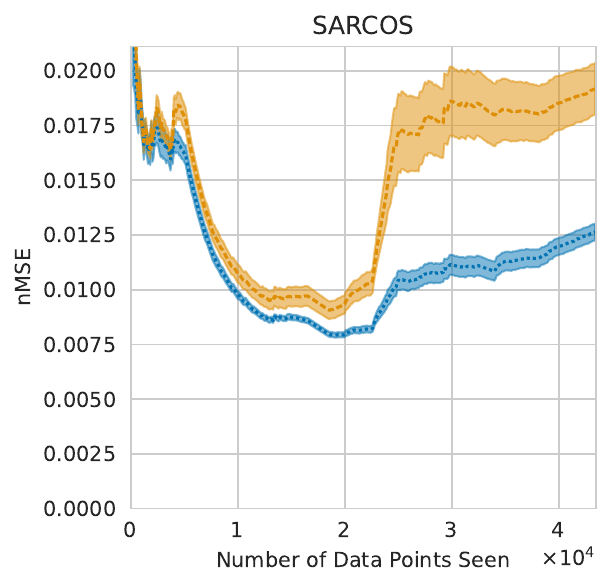}
    \end{subfigure}
    \begin{subfigure}[b]{0.3\textwidth}
        \centering
        \includegraphics[width=\textwidth]{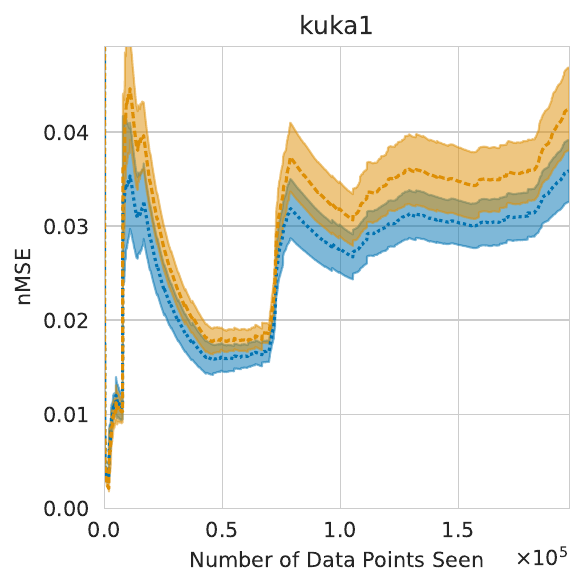}
    \end{subfigure}
    \begin{subfigure}[b]{0.3\textwidth}
        \centering
        \includegraphics[width=\textwidth]{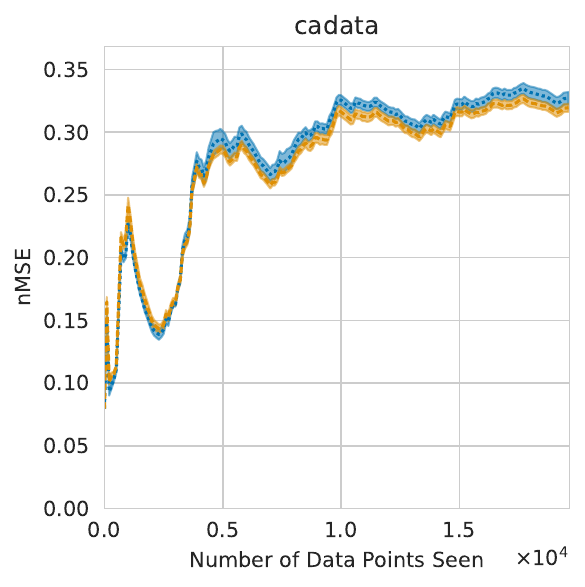}
    \end{subfigure} \\
    \begin{subfigure}[b]{0.3\textwidth}
        \centering
        \includegraphics[width=\textwidth]{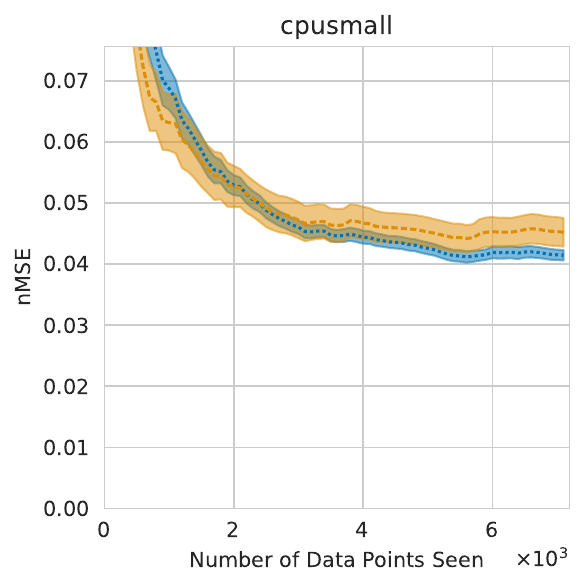}
    \end{subfigure}
    \begin{subfigure}[b]{0.6\textwidth}
        \centering
        \includegraphics[width=0.7\textwidth]{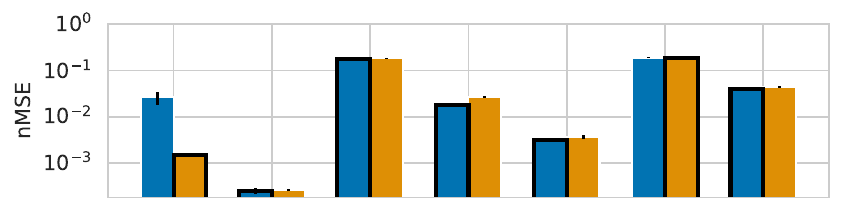} \\
        \includegraphics[width=0.7\textwidth]{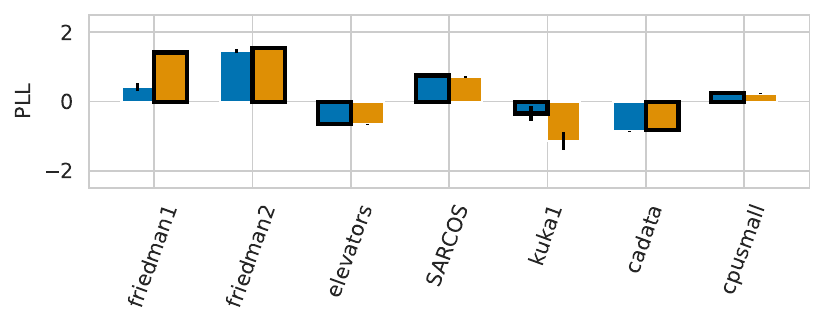} 
    \end{subfigure}
    \caption{\textbf{(Continuous Plots)} nMSE plots of random and optimized Fourier features. Lines indicate the mean over \num{10} trials, with shaded regions denoting $\pm 1$ standard deviation. \textbf{(Bar Plots)} Summaries of the nMSE (lower is better) and PLL (higher is better) with error bars denoting one standard deviation over \num{10} random trials. The best-performing method on each dataset and metric is highlighted with bold edges --- to preserve readability, the nMSE axis was bounded at $+1.0$ and the PLL axis is bound at $\pm 2.5$, even if points extend past this.}
    \label{fig:exp4}
\end{figure}

\end{document}